\newtheorem{fact}{Fact}
\newcommand{\DKL}{\!{KL}\xspace}
\newcommand{\ind}[1]{\bb{1}_{#1}}
\newcommand{\Ber}{{\!{Ber}}\xspace}
\newcommand{\findbest}{\textsc{FindBest}\xspace}
\newcommand{\me}{\textsc{MedianElimination}\xspace}
\renewcommand{\a}{\!{arm}}
\newcommand{\fd}{\mathfrak{d}}
\title{On the Problem of Best Arm Retention}
		\author{Houshuang Chen\\Shanghai Jiao Tong University\\\textsf{chenhoushuang@sjtu.edu.cn} \and Yuchen He \\ Shanghai Jiao Tong University\\ \textsf{yuchen\_he@sjtu.edu.cn}\and Chihao Zhang\\ Shanghai Jiao Tong University\\ \textsf{chihao@sjtu.edu.cn}}
\begin{document}	
\maketitle	
		\begin{abstract}
			This paper presents a comprehensive study on the problem of Best Arm Retention (BAR), which has recently found applications in streaming algorithms for multi-armed bandits. In the BAR problem, the goal  is to retain $m$ arms with the best arm included from $n$ after some trials, in stochastic multi-armed bandit settings. We first investigate pure exploration for the BAR problem under different criteria, and then minimize the regret with specific constraints, in the context of further exploration in streaming algorithms.
			\begin{itemize}
				\item We begin by revisiting the lower bound for the $(\eps,\delta)$-PAC algorithm for Best Arm Identification (BAI) and adapt the classical KL-divergence argument to derive optimal bounds for $(\eps,\delta)$-PAC algorithms for BAR. 
				\item We further study another variant of the problem, called $r$-BAR, which requires the expected gap between the best arm and the optimal arm retained is less than $r$. We prove tight sample complexity for the problem. 
				\item We explore the regret minimization problem for $r$-BAR and develop algorithm beyond pure exploration. We conclude with a conjecture on the optimal regret in this setting.
			\end{itemize}
		\end{abstract}
		\newpage
		
		
		

\setcounter{tocdepth}{1}		
\tableofcontents

\section{Introduction}
	The multi-armed bandit (MAB) framework, pioneered by \cite{RH52}, has emerged as a powerful paradigm for modeling sequential decision-making under uncertainty in various real-world applications, ranging from clinical trials to online advertising.  Among myriad MAB problems, the \emph{Best Arm Identification} (BAI), as the pure exploration version of stochastic MAB, stands out as a critical task, where the objective is to identify the best arm based on their rewards.
		
		At the beginning of the stochastic MAB game, the player confronts $n$ arms, each associated with an unknown distribution. At each round $t\in[T]$\footnote{ $T$ can be a stopping time.}, the player  chooses an arm and receives a reward. The player is trying to obtain higher accumulated rewards. Equivalently, the goal is to minimize the expected regret, which is the expected accumulated reward  difference between  playing the best arm with the highest mean and playing with the algorithm chosen arms. In contrast, BAI, the pure exploration version of MAB, seeks to swiftly identify the arm with the highest mean, ignoring reward considerations during the decision-making process.
		
		Recent research has explored streaming algorithms~\cite{AW22,HYZ24} employing multiple passes to retain $m$ arms due to memory constraints.  This particular setting naturally leads to the \emph{Best Arm Retention} (BAR) problem, a pragmatic extension of BAI accommodating scenarios with limited memory or computational resources. The name of the problem was coined in~\cite{HYZ24} and was also known as ``arm trapping problem'' in literature~\cite{AW22}. However, previous study for the problem is either incomplete or suboptimal.
		
		In BAR, the objective shifts from identifying the arm with the highest expected reward to retain a subset of size 
		$m$ containing the best arm for further exploration or exploitation. In practice, this subset may be subject to constraints like fixed memory capacity, making BAR an adaptable framework for addressing real-world considerations such as uncertainty, dynamic environments, and regret minimization over time. Notably, BAR reduces to the classic BAI problem when $m=1$, and becomes easier as $m$ increases.
		
		In this work, we call an arm $\eps$-optimal  if  the mean gap between the best arm and this arm is less than $\eps$. Similar to the PAC algorithm for BAI,  it is natural to investigate the $(\eps,\delta)$-BAR problem, the PAC setting of BAR, where the objective is to ensure that the set of $m$ retained arms contains  an $\eps$-optimal arm  with at least $1-\delta$ probability after observing as few samples as possible. The least number of samples to fulfill the requirement is called the sample complexity of $(\eps,\delta)$-BAR. The tight bound for this complexity is provided in   \Cref{thm:bar}.
		
		\begin{theorem}\label{thm:bar}
			For any $(\eps,\delta)$-PAC algorithm for BAR satisfying $\eps\leq \frac{1}{8}$ and $\delta\leq \frac{n-m}{n}(1-\beta)$, where $\beta\in (0,1)$ is a universal constant, the sample complexity is 
			$$\Theta\tp{\frac{n-m}{\eps^2}\log\frac{n-m}{n\delta}}.$$
		\end{theorem}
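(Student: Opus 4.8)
Throughout write $k:=n-m$ for the number of arms that must be discarded; the claim is $\Theta\!\big(\frac{k}{\eps^2}\log\frac{k}{n\delta}\big)$, and I treat the two directions separately.

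For the upper bound the plan is a randomized reduction to ordinary BAI that exploits the fact that retaining $m$ arms gives $m-1$ ``free'' slots. I would first pick a uniformly random set $F$ of $m-1$ arms, commit to keeping them without spending any samples, and designate the remaining $k+1$ arms as \emph{contested}; then run $\me$ as an $(\eps,\delta_1)$-PAC subroutine on the contested arms and keep its single output. The algorithm fails only if the true best arm $a^\star$ lands among the contested arms \emph{and} $\me$ misses an $\eps$-optimal arm there, since whenever $a^\star\in F$ it is kept for free. As the partition is independent of the run, $\mathbb P(\text{fail})\le\mathbb P(a^\star\in\text{contested})\cdot\delta_1=\tfrac{k+1}{n}\delta_1$, so the choice $\delta_1=\tfrac{n}{k+1}\delta$ makes this at most $\delta$. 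The decisive point is that the confidence is \emph{relaxed} to $\delta_1=\Theta(\tfrac nk\delta)$, whence $\me$ spends $O\!\big(\tfrac{k+1}{\eps^2}\log\tfrac1{\delta_1}\big)=O\!\big(\tfrac{k}{\eps^2}\log\tfrac{k}{n\delta}\big)$ samples; the hypothesis $\delta\le\tfrac{n-m}{n}(1-\beta)$ is exactly what keeps $\delta_1$ bounded away from $1$ so that $\me$ is well defined.

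For the lower bound I would run the classical KL change-of-measure argument, adapted to the retention objective. Take a reference instance $I_0$ with every arm $\mathrm{Ber}(\tfrac12-\eps)$, and for each $i$ an instance $I_i$ that raises arm $i$ to $\mathrm{Ber}(\tfrac12+\eps)$; in $I_i$ arm $i$ is the unique $\eps$-optimal arm (every other arm has gap $2\eps$), so a correct algorithm must retain it with probability $\ge 1-\delta$. Let $p_i$ be the probability that arm $i$ is discarded under $I_0$, so that $\sum_i p_i=k$. For the event $E_i=\{\text{$i$ discarded}\}$, the divergence-decomposition identity and the data-processing inequality give $\mathbb E_{I_0}[N_i]\cdot\mathrm{KL}\!\big(\mathrm{Ber}(\tfrac12-\eps)\,\|\,\mathrm{Ber}(\tfrac12+\eps)\big)\ge d\big(p_i\,\|\,\mathbb P_{I_i}(E_i)\big)\ge d(p_i\,\|\,\delta)$, the last inequality holding for $p_i\ge\delta$ by monotonicity; since $\eps\le\tfrac18$ keeps all means in $[\tfrac38,\tfrac58]$, the left KL is at most $C\eps^2$, so $\mathbb E_{I_0}[N_i]\ge\frac1{C\eps^2}\,d(p_i\,\|\,\delta)$.

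It then remains to show $\sum_i d(p_i\,\|\,\delta)=\Omega\!\big(k\log\frac{k}{n\delta}\big)$ subject to $\sum_i p_i=k$. I would restrict to the frequently-discarded arms $S=\{i:p_i\ge\delta\}$, which carry most of the mass: $\sum_{i\in S}p_i\ge k-n\delta\ge\beta k$ by the hypothesis on $\delta$. Dropping the negative correction terms (each $(1-p_i)\log\frac{1-p_i}{1-\delta}$ is controlled since $\sum_i(p_i-\delta)\le k$ via $\log x\le x-1$) and applying the log-sum inequality, $\sum_{i\in S}p_i\log\frac{p_i}{\delta}\ge\big(\sum_{i\in S}p_i\big)\log\frac{\sum_{i\in S}p_i}{|S|\,\delta}\ge\beta k\log\frac{\beta k}{n\delta}$, yields the bound, and summing over arms gives the claimed $\Omega\!\big(\frac{k}{\eps^2}\log\frac{k}{n\delta}\big)$ on the total number of pulls under $I_0$. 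I expect the main obstacle to be precisely this last estimate: the per-arm inequality $d(p_i\,\|\,\mathbb P_{I_i}(E_i))\ge d(p_i\,\|\,\delta)$ is only valid in the right direction when $p_i\ge\delta$, so the restriction to $S$ is forced, and one must then verify both that $S$ still carries a constant fraction of the mass and that the positive term $\beta k\log\frac{\beta k}{n\delta}$ dominates the $O(k)$ correction. Both requirements are met only because $\delta\le\frac{n-m}{n}(1-\beta)$ guarantees that $\frac{k}{n\delta}$ is bounded below by a large constant, so choosing the universal $\beta$ close enough to $1$ is what makes the argument close — this is the one genuinely new ingredient beyond the $m=1$ BAI lower bound.
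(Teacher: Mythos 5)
Your upper bound is the paper's algorithm verbatim: set aside $m-1$ uniformly random arms for free, run \me on the remaining $n-m+1$ arms at the relaxed confidence $\Theta\tp{\frac{n}{n-m+1}\delta}$, and multiply the two failure probabilities. That direction is correct.

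The lower bound follows the paper's strategy in outline (per-arm change of measure via the divergence decomposition, then aggregation of $\sum_i \fd(p_i,\delta)$ under the constraint $\sum_i p_i=n-m$), but the aggregation step has a genuine gap. After restricting to $S=\set{i:p_i\ge\delta}$ you discard the negative halves $(1-p_i)\log\frac{1-p_i}{1-\delta}$ of the binary divergences, bounding their total by $-O(k)$, and retain only $\sum_{i\in S}p_i\log\frac{p_i}{\delta}\ge \beta k\log\frac{\beta k}{n\delta}$ from the log-sum inequality. In the regime the theorem is actually designed for --- $\delta=\Theta\tp{\frac{n-m}{n}}$, which for $m=1$ is precisely the case $\delta\ge\frac12$ that earlier arguments could not reach --- the constraint $\delta\le\frac{k}{n}(1-\beta)$ only gives $\frac{\beta k}{n\delta}\ge\frac{\beta}{1-\beta}$, so $\log\frac{\beta k}{n\delta}$ is $O(1)$ and is negative whenever $\beta\le\frac12$. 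The discarded $\Theta(k)$ then swamps the retained term: for $m=1$ and $\delta=\frac12$ (where admissibility forces $\beta<\frac12$) your final bound is roughly $k\tp{\beta\log(2\beta)-1}<0$, i.e.\ vacuous. You acknowledge the tension and propose to ``choose $\beta$ close to $1$,'' but $\beta$ is a parameter of the theorem, not the prover's choice; as written your argument establishes the statement only for $\delta\le c_0\cdot\frac{n-m}{n}$ with a small absolute constant $c_0$, which is exactly the regime already handled by classical arguments and misses the paper's main contribution.

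The repair is to not split the positive and negative halves of the divergence before aggregating. The paper first uses convexity of $\fd(\cdot,b)$ (\Cref{lem:kl-sum}) to collapse $\sum_{i:x_i<b}\fd(x_i,b)$ into the single term $n\cdot\fd(a,b)$ with $a$ the average, and only then splits $\fd(b,a)$ into its two halves (\Cref{lem:bar-kl}); at that aggregated level the ratio satisfies $\frac{b-a}{a}\ge\frac{\beta}{1-\beta}=:r$, and the first-order cancellation between the two halves leaves a $\frac{r}{2+2r}$ fraction of $b\log\frac{b}{a}$, which is positive for every $\beta>0$. Your route could be salvaged similarly by keeping the $+|S|\delta-\sum_{i\in S}p_i$ correction and lower-bounding $P\log\frac{P}{Q}+Q-P$ as a single expression (it is at least $\frac{P-Q}{2}\log\frac{P}{Q}$ by \Cref{fact:log-ineq}), but the version you wrote does not close.
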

		It is trivial that the sample complexity is zero when $\delta\geq\frac{n-m}{n}$ because we can choose $m$ arms uniformly at random. In fact,  the lower bound in \Cref{thm:bar} addresses almost all feasible $\delta$ except $\frac{n-m}{n}-\delta=o\tp{\frac{1}{n}}$, as explained in \Cref{remark}. 
		
		When $m=1$, the above theorem can be applied to the BAI problem,  yielding a sample complexity  bound for $\delta\geq 0.5$. To the best of our knowledge, previous methods in literature  could only handle the case where $\delta< 0.5$.
		\begin{corollary}
			For any $(\eps,\delta)$-PAC algorithm for BAI satisfying $\eps\leq \frac{1}{8}$ and $\delta\leq \frac{n-1}{n}(1-\beta)$, where $\beta\in (0,1)$ is a universal constant, the sample complexity is 
			$$\Theta\tp{\frac{n-1}{\eps^2}\log\frac{n-1}{n\delta}}.$$
		\end{corollary}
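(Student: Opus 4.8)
The plan is to obtain the corollary as the immediate specialization of \Cref{thm:bar} to the case $m=1$. The first step is to verify that the BAR problem degenerates exactly to BAI in this regime: when $m=1$, retaining a single arm that is $\eps$-optimal with probability at least $1-\delta$ is precisely the defining requirement of an $(\eps,\delta)$-PAC algorithm for BAI. Hence the class of $(\eps,\delta)$-PAC algorithms for BAR with $m=1$ coincides with the class of $(\eps,\delta)$-PAC algorithms for BAI, and any sample-complexity characterization of the former transfers verbatim to the latter.

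The second step is purely to substitute $m=1$ into the statement of \Cref{thm:bar}. The hypothesis $\eps \le \frac{1}{8}$ is unchanged, the hypothesis $\delta \le \frac{n-m}{n}(1-\beta)$ becomes $\delta \le \frac{n-1}{n}(1-\beta)$, and the two-sided bound $\Theta\tp{\frac{n-m}{\eps^2}\log\frac{n-m}{n\delta}}$ becomes $\Theta\tp{\frac{n-1}{\eps^2}\log\frac{n-1}{n\delta}}$, which is exactly the claimed complexity. Both the upper and lower bounds are inherited directly, so no separate construction or analysis is required.

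There is no genuine obstacle here beyond the theorem it relies on; the proof is a one-line reduction, and the entire mathematical content lives inside \Cref{thm:bar}. The only point worth stressing is the \emph{range} of $\delta$ that the corollary covers: since $\beta$ is a small universal constant, $\frac{n-1}{n}(1-\beta)$ exceeds $\frac{1}{2}$ for all sufficiently large $n$, so the corollary in particular settles the sample complexity of $(\eps,\delta)$-PAC BAI in the regime $\delta \ge \frac{1}{2}$. This is precisely the regime that the classical KL-divergence lower-bound arguments could not reach, so the value of the corollary is entirely derived from the strengthened lower bound established within \Cref{thm:bar}.
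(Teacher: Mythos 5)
Your proposal is correct and matches the paper's own treatment: the corollary is obtained by observing that BAR with $m=1$ is exactly BAI and substituting $m=1$ into \Cref{thm:bar}, with the added value lying entirely in the extended range of $\delta$ (in particular $\delta\geq \frac12$) that the strengthened lower bound covers. No further argument is needed.
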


		Recall that the  BAR algorithm is used to retain $m$ arms for further task. If the objective of the subsequent task is to minimize regret,  achieving $(\eps,\delta)$-PAC optimality  is not strictly necessary.  As noted in \cite{HYZ24}, it suffices for the expected gap between the mean of the best and the optimal in the retained arm to be small. That is a weaker requirement than $(\eps,\delta)$-PAC learnability. To capture the complexity of this requirement, we define the problem $r$-BAR, where the goal is to guarantee the expected gap is less than $r$. As before, when $m=1$, it is equivalent to identifying an arm whose mean is at most $r$ from the optimal one in expectation. We call it $r$-BAI problem, which has been investigated in~\cite{BMS09}. We determine the sample complexity of this problem in \Cref{thm:rbar-sample}.

		\begin{theorem}\label{thm:rbar-sample}
			The sample complexity of $r$-BAR is 
			$$\Theta\tp{\frac{(n-m)^3}{(nr)^2}}.$$
		\end{theorem}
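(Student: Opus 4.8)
The plan is to prove both directions by reducing $r$-BAR to the $(\eps,\delta)$-BAR problem whose complexity is pinned down in \Cref{thm:bar}, exploiting the identity $\bb{E}[\mathrm{gap}]=\int_0^\infty\Pr[\mathrm{gap}>\gamma]\,d\gamma$ that links the expected-gap objective to a continuum of PAC requirements. Throughout write $k:=n-m$ and let $\mathrm{gap}$ denote $\mu^\star-\max_{i\in S}\mu_i$ for the retained set $S$. The two reductions are: (i) a tail bound from a fixed-budget PAC algorithm, integrated, gives the upper bound; and (ii) Markov's inequality turns an expected-gap guarantee into a PAC guarantee at a tunable margin, which the lower bound of \Cref{thm:bar} then defeats.

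For the upper bound I would run, with a fixed budget $T$, the \me-based $(\eps,\delta)$-BAR algorithm underlying \Cref{thm:bar}. Inverting its sample complexity $\Theta\tp{\frac{k}{\eps^2}\log\frac{k}{n\delta}}$ at margin $\gamma$ yields a tail of the form $\Pr[\mathrm{gap}>\gamma]\le\frac{k}{n}\exp\tp{-\Omega\tp{T\gamma^2/k}}$, where the prefactor $\frac{k}{n}$ is exactly the failure probability of retaining $m$ arms uniformly at random using no samples. Integrating this tail gives the Gaussian-type estimate $\bb{E}[\mathrm{gap}]\le\frac{k}{n}\int_0^\infty\exp\tp{-\Omega\tp{T\gamma^2/k}}\,d\gamma=O\tp{\frac{k^{3/2}}{n\sqrt{T}}}$, and setting the right-hand side equal to $r$ produces the budget $T=O\tp{\frac{k^3}{n^2r^2}}$, which is the claimed upper bound.

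For the lower bound I would argue contrapositively. Suppose an algorithm uses budget $T$ and achieves expected gap at most $r$ on every instance. Markov's inequality gives $\Pr[\mathrm{gap}>\gamma]\le r/\gamma$ for every $\gamma$, so the same algorithm is a valid $(\gamma,r/\gamma)$-PAC algorithm for BAR. Applying the lower bound of \Cref{thm:bar} with $\eps=\gamma$ and $\delta=r/\gamma$ forces $T=\Omega\tp{\frac{k}{\gamma^2}\log\frac{k\gamma}{nr}}$. I then optimize the free margin by substituting $\gamma=\lambda\,nr/k$, whereupon the bound becomes $\frac{k^3}{n^2r^2}\cdot\frac{\log\lambda}{\lambda^2}$; this is maximized at $\lambda=\sqrt{e}$ (giving $\frac{\log\lambda}{\lambda^2}=\frac{1}{2e}$), which yields $T=\Omega\tp{\frac{k^3}{n^2r^2}}$ and matches the upper bound.

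Two points require care. For the upper bound the delicate step—where I expect the main work—is upgrading the per-pair $(\eps,\delta)$ guarantee into a single fixed-budget algorithm whose tail $\Pr[\mathrm{gap}>\gamma]$ decays like the stated exponential uniformly in $\gamma$; this should follow from the phase structure of \me by summing the per-phase elimination-failure probabilities into one exponential tail in the margin. For the lower bound I must verify that the chosen parameters lie in the regime where \Cref{thm:bar} applies: with $\lambda=\sqrt{e}$ one has $\delta=r/\gamma=\frac{k}{n\sqrt{e}}\le\frac{k}{n}(1-\beta)$ whenever the universal constant satisfies $\beta\le 1-e^{-1/2}$, while the constraint $\gamma=\sqrt{e}\,nr/k\le\frac18$ simply restricts attention to the meaningful range $r=O(k/n)$ — for larger $r$ the complexity collapses, since retaining $m$ arms at random already guarantees expected gap at most $k/n$.
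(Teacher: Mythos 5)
Your lower bound is essentially the paper's own argument: Markov's inequality turns an $r$-BAR guarantee into a $(\gamma,r/\gamma)$-PAC guarantee for BAR, and the lower bound of \Cref{thm:bar} (via \Cref{thm:bar-lb}) is invoked at a margin $\gamma=\Theta\tp{nr/(n-m)}$; the paper takes $\gamma=2nr/(n-m)$ where you take $\sqrt{e}\cdot nr/(n-m)$, and your check of the admissibility conditions ($\delta\leq\frac{n-m}{n}(1-\beta)$ and $\gamma\leq\frac18$) is correct. This half of your proposal matches the paper.

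The upper bound has a genuine gap, precisely at the step you flagged. The property you need --- a single fixed-budget algorithm satisfying $\Pr{\mathrm{gap}>\gamma}\leq\frac{n-m}{n}\exp\tp{-\Omega\tp{T\gamma^2/(n-m)}}$ \emph{simultaneously for all} $\gamma$, with no polynomial-in-$(n-m)$ prefactor --- does not follow from inverting the $(\eps,\delta)$-BAR sample complexity: that complexity is attained by \me, whose per-phase sample sizes are calibrated to one fixed pair $(\eps,\delta)$, so inversion yields a tail bound at that single margin only. The natural fixed-budget analyses (uniform allocation with empirical best, or \me with its per-phase union bounds) give a tail of the form $\min\set{1,\,\mathrm{poly}(n-m)\cdot e^{-\Omega(T\gamma^2/(n-m))}}$, and integrating \emph{that} yields an expected gap of order $\frac{n-m}{n}\sqrt{\frac{(n-m)\log(n-m)}{T}}$, hence a budget of $O\tp{\frac{(n-m)^3\log(n-m)}{(nr)^2}}$ --- off by a logarithmic factor. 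Eliminating that prefactor uniformly in $\gamma$ is essentially the well-known difficulty of removing log factors in fixed-budget identification and is not something the phase structure of \me hands you. The paper sidesteps the issue: after drawing the random $(n-m+1)$-subset (which contributes the $\frac{n-m+1}{n}$ factor exactly as in your argument), it bounds the subroutine's conditional expected gap directly by a regret-to-simple-regret conversion (\Cref{lem:findbest}): run \textsc{MirrorDescent} for $T$ rounds and recommend an arm with probability proportional to its play count, so the expected gap is at most $R(T)/T\leq\sqrt{2(n-m+1)/T}$, with no logarithm. Since $\int_0^1\Pr{\mathrm{gap}>\gamma}\,d\gamma$ \emph{is} the subroutine's expected gap, your decomposition is otherwise sound; replace the \me-tail-integration step by any subroutine with simple regret $O\tp{\sqrt{(n-m)/T}}$, e.g.\ \Cref{algo:findbest}, and the rest of your calculation goes through.
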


		The motivation for introducing $r$-BAR is to minimize regret for the subsequent task. It is also natural to consider the incurred regret of the BAR algorithm itself, beyond pure exploration. To this end, we introduce a new complexity measure called \emph{regret complexity}, which intuitively measures how much regret one has to pay to retain an arm whose expected mean reward is at most $r$ from the best. The formal definition of the regret complexity is in \Cref{sec:prelim-BAR}, and we present the corresponding bounds in \Cref{thm:rbar-regret}.

		\begin{theorem}\label{thm:rbar-regret}
			There exists an algorithm for $r$-BAR such that the regret complexity is no more than
			\[
			\+O\tp{\frac{(n-m)^2}{nr}\tp{1+\sqrt{\frac{m}{n-m}}}},
			\]
			and for any algorithm, the regret complexity is no less than
			$$\Omega\tp{\frac{(n-m)^2}{nr}}.$$
		\end{theorem}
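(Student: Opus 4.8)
The plan is to prove the two bounds by separate arguments and then note that they coincide up to the factor $1+\sqrt{m/(n-m)}$, which is a constant for $m\le n/2$ and inflates only when $m>n/2$; this residual slack is precisely what the concluding conjecture concerns. Throughout I would reduce to Bernoulli instances and measure regret as $\sum_t(\mu^\star-\mu_{I_t})$, where $I_t$ is the arm pulled at step $t$.

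For the lower bound I would use the one-parameter family $\{\nu^{(j)}\}_{j\in[n]}$ in which arm $j$ is elevated to mean $\frac12+\Delta$ while all other arms have mean $\frac12$, together with the null instance $\nu^{(0)}$ where every arm has mean $\frac12$. The crucial calibration is to set $\Delta$ to a fixed constant multiple of the triviality threshold $\frac{nr}{n-m}$: at this scale uniformly random retention violates the $r$-BAR requirement by a constant factor, so any valid algorithm must retain the elevated arm on $\nu^{(j)}$ with probability exceeding the random baseline $m/n$ by $\Omega\!\left(\frac{n-m}{n}\right)$. Since on $\nu^{(j)}$ every pull of a non-elevated arm incurs regret exactly $\Delta$, the regret equals $\Delta$ times the expected number of suboptimal pulls. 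I would then run the same change-of-measure (KL-divergence) computation used to prove \Cref{thm:rbar-sample}, showing that attaining the required retention advantage forces $\Omega\!\left(\frac{(n-m)^3}{(nr)^2}\right)$ exploration; multiplying by the per-pull cost $\Delta\asymp\frac{nr}{n-m}$ recovers $\Omega\!\left(\frac{(n-m)^2}{nr}\right)$.

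For the upper bound I would design an elimination-type algorithm: sample the surviving arms in rounds, maintain confidence intervals calibrated to the target $r$ (rather than to a high-probability level, which is unnecessary for an expected-gap guarantee), and permanently discard any arm whose optimistic estimate falls confidently below the pessimistic estimate of the current $m$-th best, stopping once $m$ arms remain or the retained set already certifies expected gap at most $r$. The regret is $\sum_i \Delta_i N_i$, and the analysis splits the arms by their gap to the retention cutoff $\mu_{(m)}$: clearly-bad arms are eliminated after $O(1/\delta_i^2)$ pulls and contribute a geometric sum, while the bottleneck is the band of borderline arms just below the cutoff, whose combined contribution I would bound by Cauchy--Schwarz over the $n-m$ discarded arms. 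It is this second-moment step, together with the worst-case gap profile, that produces the extra $\sqrt{m/(n-m)}$ relative to the lower bound.

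The main obstacle will be reconciling the two bounds, and this difficulty is structural rather than cosmetic. In the lower bound, the change-of-measure argument most naturally controls the number of pulls on the null instance (where no regret is incurred) or the number of pulls of the elevated arm (which are free); transferring the forced exploration into regret actually paid on $\nu^{(j)}$ requires a coupling or truncation argument showing that the algorithm's allocation cannot deviate from its null-instance behavior before the elevated arm has been sufficiently sampled, and that the retention guarantee is unreachable until then. Meanwhile, for large $m$ the worst-case instance for the algorithm sits at a larger gap $\Delta\asymp\frac{nr}{n-m}\sqrt{m/(n-m)}$ than the single-parameter family used in the lower bound can witness, so the elementary hard instance simply cannot certify more than $\Omega\!\left(\frac{(n-m)^2}{nr}\right)$. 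Closing this gap---either by a richer lower-bound construction with a nontrivial gap profile among the borderline arms or by a sharper allocation in the algorithm---is exactly the content of the conjecture, and I do not expect the single-arm family to suffice.
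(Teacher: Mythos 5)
Your lower-bound plan is essentially the paper's argument: by Markov's inequality an $r$-BAR algorithm is a $\bigl(\frac{2nr}{n-m},\frac{n-m}{2n}\bigr)$-PAC algorithm for BAR, so \Cref{thm:bar-lb} forces $\E[1]{T-T_1}=\Omega\bigl(\frac{(n-m)^3}{(nr)^2}\bigr)$ suboptimal pulls on the instance in which arm $1$ has mean $\frac12+\eps$ with $\eps=\frac{2nr}{n-m}$ and every other arm has mean $\frac12$, and each such pull costs regret exactly $\eps$. The obstacle you flag --- that change of measure controls pulls on a regret-free null instance --- does not arise, because the base measure is not null: it already has a unique best arm, and the quantity being lower bounded is precisely the number of regret-bearing pulls. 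The genuine subtlety you are missing is different: \Cref{thm:bar-lb} requires an almost-surely finite stopping time, and an $r$-BAR algorithm could run forever while pulling only arm $1$, so ``infinite samples'' does not imply ``infinite regret''. The paper handles this by truncating a hypothetical low-regret $\frac r2$-BAR algorithm at $T'=\omega\bigl(\frac{(n-m)^2}{nr^2}\bigr)$, selecting an arm proportionally to pull counts at truncation, and applying Markov's inequality to the accumulated regret to show the truncated algorithm still solves $r$-BAR --- a contradiction.

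Your upper bound is a genuinely different algorithm, and it does not reach the stated bound when $m$ is close to $n$. Eliminating all $n$ arms down to $m$ survivors treats the problem as top-$m$ selection and forgoes the mechanism the paper exploits: set aside $m-1$ uniformly random arms untouched, so the best arm is retained for free with probability $\frac{m-1}{n}$ and the accuracy demanded of the explored $(n-m+1)$-arm subproblem relaxes from $r$ to roughly $\frac{n}{n-m}r$. On the hard instance above, all $n-1$ suboptimal arms are statistically identical, so your elimination rule can never confidently separate them from one another and degenerates to uniform exploration at gap $\eps=\Theta\bigl(\frac{nr}{n-m}\bigr)$; distinguishing arm $1$ from each of them costs $\Theta(1/\eps^2)$ pulls per arm, hence regret $\Theta\bigl(\frac{n-1}{\eps}\bigr)=\Theta\bigl(\frac{n-m}{r}\bigr)$, which exceeds the claimed $O\bigl(\frac{(n-m)^{3/2}}{\sqrt{n}\,r}\bigr)$ by a factor of $\sqrt{n/(n-m)}$. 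There is also a definitional problem with ``stopping once the retained set certifies expected gap at most $r$'': the $r$-BAR guarantee is an expectation over the algorithm's own randomness, not a per-instance event that confidence intervals can witness (on the all-equal instance nothing is ever certified). The paper's route is instead two runs of mirror descent --- a short warm-up on all $n$ arms producing a candidate $i_1$ with $\E{\mu_{i^*}-\mu_{i_1}}\le\sqrt{2n/L_1}$, then a longer run on $\{i_1\}$ together with a uniformly random $(n-m+1)$-subset, retaining the arm drawn proportionally to pull counts --- with $L_1,L_2$ balanced so that the three regret terms $\sqrt{2nL_1}$, $\sqrt{2(n-m+2)L_2}$ and $\frac{m-2}{n-1}\sqrt{2n/L_1}\,L_2$ sum to the claimed bound; the $\sqrt{m/(n-m)}$ slack is the cost of the warm-up phase, not of a Cauchy--Schwarz step.
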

		The gap between the upper and the lower bounds is $\tp{1+\sqrt{\frac{m}{n-m}}}$. Thus our bounds are tight for $n-m=\Omega(n)$. When $m$ is very close to $n$, the gap is $\sqrt{n}$, and we will explain in \Cref{ssec:diff} that  eliminating this gap is not easy because different instances requires different sample size and therefore a more sophisticated adaptive strategy is required for an optimal algorithm.

		\subsection*{Related Work}

		Pure exploration in stochastic MAB (see \cite{LS20})  has garnered significant attention and has been explored in various settings, with the most prominent being Best Arm Identification (BAI). Research in this area has primarily focused on two key frameworks: fixed confidence and fixed budget. 
		
		The sample complexity under fixed confidence, where the goal is to identify the best arm with a given confidence level, has been extensively studied by works \cite{EMM06,JMNB14,GK16,CLQ17,WTP21,Russo16}.  Several algorithms have been developed, such as \emph{successive elimination} \cite{EMM06}, \emph{upper confidence bound} \cite{GGL12,KTAS12,JMNB14,KK13}, \emph{exponential-gap elimination} \cite{KKS13} and the best known algorithm to date \cite{CLQ17}.

		In contrast, the fixed-budget setting, where the objective is to maximize the probability of identifying the best arm within a limited number of samples, has also received significant attention. Pioneering works by \cite{MLG04,BMS09,ABM10,KKS13,CL16, KCG16} have provided insights into the trade-offs between budget constraints and success probability.
		
		The optimal bounds for both above settings are still unknown. An easier setting is to output an $\eps$-optimal arm instead of the best arm.  The fixed budget under this setting is to maximize the probability of outputting the $\eps$-optimal arm \cite{KJ20,ZSSJ23}.   For the fixed confidence setting, it known as 
		$(\eps,\delta)$-PAC (Probably Approximately Correct) algorithms for BAI, aiming to identify an 
		$\eps$-optimal arm with fixed confidence $\delta$, has been well studied by~\cite{MT04,HR22,JDK23}. 
		
		These settings naturally extend to finding the top 
		$m$ arms, a problem that has been explored by \cite{KS10,KTAS12,YQWY23,CHFL08,SJR17}. However, this extension poses greater complexity and in practice, retaining the best arm alone often suffices. For instance, if one would like to perform some regret minimization algorithm on the $m$ arms, retaining the optimal one already yields optimal regret. Notably, $(\eps,\delta)$-PAC algorithm requires $\Omega\tp{\frac{n}{\eps^2}\log\frac{m}{\delta}}$ samples to retain the top $m$ arms  \cite{KTAS12},  which is worse than our bounds for BAR in \Cref{thm:bar}.  Moreover, structured arm groups, where arms exhibit dependencies or are organized in a combinatorial structure, have also been investigated. Notable works in this area include \cite{CLKLC14,KKR23,SWX24,DMSV20,FJJR19,LGC16,MJTN20}.
		
		The concept of \emph{Best Arm Retention} (BAR) was introduced by  \cite{HYZ24},  which extends the traditional BAI framework by focusing on retaining an 
		$\eps$-best arm with fixed confidence. The idea of ``trapping the best arm''   in a similar context first appeared in the work of \cite{AW22} in the domain of stream algorithms. Additionally, the notion of $r$-BAI, a special case of $r$-BAR when $m=1$ and often referred to as ``simple regret'' has been discussed in earlier works by \cite{BMS09}. Very recently, the work \cite{YZZ25} applied technique developed in this paper to establish a tight gap-dependent lower bound for single-pass streaming stochastic MAB.

		To the best of our knowledge, this paper is the first systematic investigation of the BAR problem.
		
		\section{Notation and Preliminaries}
		For any integer $n>0$, let $[n]$ denote the set $\set{1,2,\dots,n}$. If $x,y\in\bb R$ and $x\leq y$, $[x,y]$ denotes the closed interval $\set{z:x\leq z\leq y}$.
		Let $\Ber(x)$ denote the Bernoulli distribution with mean $x\in[0,1]$. 
		The  Kullback-Leibler (KL) divergence between two Bernoulli distributions with means $x$ and $y$ is given by 
		$\fd(x,y)\defeq x\log\frac{x}{y}+(1-x)\log\frac{1-x}{1-y}$  for brevity.
		There are some properties of the KL divergence:
		\begin{fact}\label{fact:kl}
			\begin{enumerate}[(a)]
				\item $\mathfrak{d}(\cdot, y)$ (or $\fd(x,\cdot)$) is convex for any fixed $y$ (or $x$);
				\item for any $0\leq a\leq x\leq y\leq b\leq 1$,  $\fd(a,b)\geq \fd(x,y)$;
			\end{enumerate}
		\end{fact}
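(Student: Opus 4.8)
The plan is to treat the two parts separately, both by elementary calculus applied directly to the explicit formula $\fd(x,y)=x\log\frac{x}{y}+(1-x)\log\frac{1-x}{1-y}$.

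For part (a) I would simply compute the relevant second partial derivatives. Differentiating twice in the first argument, all terms depending on $y$ disappear after the first derivative, leaving
\[
\frac{\partial^2}{\partial x^2}\fd(x,y)=\frac{1}{x}+\frac{1}{1-x}=\frac{1}{x(1-x)}>0
\]
on $(0,1)$, so $\fd(\cdot,y)$ is convex. By the symmetric computation in the second argument,
\[
\frac{\partial^2}{\partial y^2}\fd(x,y)=\frac{x}{y^2}+\frac{1-x}{(1-y)^2}>0,
\]
which gives convexity of $\fd(x,\cdot)$. No further work is needed here.

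For part (b) the idea is to reduce the two-sided comparison $0\le a\le x\le y\le b\le 1$ to two one-variable monotonicity statements, chained through the intermediate point $(x,b)$. First I would record the first partial derivatives,
\[
\frac{\partial}{\partial x}\fd(x,y)=\log\frac{x}{y}+\log\frac{1-y}{1-x}=\log\frac{x(1-y)}{y(1-x)},\qquad \frac{\partial}{\partial y}\fd(x,y)=\frac{1-x}{1-y}-\frac{x}{y}=\frac{y-x}{y(1-y)}.
\]
When the first argument lies below the second, the argument of the logarithm is at most $1$, so $\partial_x\fd\le 0$; hence $a\mapsto\fd(a,b)$ is nonincreasing on $[0,b]$, and since $a\le x\le b$ this yields $\fd(a,b)\ge\fd(x,b)$. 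Likewise $\partial_y\fd$ has the sign of $y-x$, so $y\mapsto\fd(x,y)$ is nondecreasing on $[x,1]$, and since $x\le y\le b$ this yields $\fd(x,b)\ge\fd(x,y)$. Concatenating the two inequalities gives exactly $\fd(a,b)\ge\fd(x,b)\ge\fd(x,y)$.

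I do not expect a genuine obstacle: both parts are routine once the derivatives are in hand. The only mild care is at the boundary, where the logarithms can diverge; there I would interpret $\fd$ by continuity (with the usual conventions $0\log 0=0$ and value $+\infty$ when the support condition fails) and observe that the monotonicity conclusions extend to the closed intervals by continuity of $\fd$ on the relevant region. The slightly more subtle of the two parts is (b), purely in choosing the intermediate point $(x,b)$ so that along each leg of the chain the moving variable stays on the correct side of the fixed one, which is what makes each first derivative have a definite sign.
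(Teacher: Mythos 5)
Your proof is correct and follows essentially the same route as the paper's: part (a) by computing the second partial derivatives, and part (b) by chaining through the same intermediate point to get $\fd(a,b)\geq \fd(x,b)\geq \fd(x,y)$. The only cosmetic difference is that you derive the two monotonicity claims in (b) from the explicit signs of the first partial derivatives, whereas the paper deduces them from convexity together with the vanishing of the derivative on the diagonal $x=y$; both yield the same conclusion.
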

		\begin{proof}
			\begin{enumerate}[(a)]
				\item Directly calculating    $\frac{\partial^2 \fd(x,y) }{\partial x^2}\geq 0$ implies $\fd(\cdot, y)$ is convex and $\fd(x,\cdot)$ is similar. 
				\item Since $\left. \frac{\partial{\fd(x,y)}}{\partial x}\right\vert_{x=y}=0$, $\fd(\cdot,y)$ achieves the minimum in $x=y$. Similarly, $\fd(x,y)$ is the minimum of $\fd(x,\cdot)$. Therefore, $\fd(a,b)\geq \fd(x,b)\geq \fd(x,y)$.
			\end{enumerate}
		\end{proof}
		
		The following fact will assist in  bounding the KL divergence.
		\begin{fact}[ \cite{Top07} ]\label{fact:log-ineq}
			The following inequalities hold.
			\begin{enumerate}[(a)]
				\item $\log(1+x)\geq \frac{x}{1+x},  \forall x>-1$;
				\item $\log(1+x)\geq \frac{x}{1+x}(1+\frac{x}{2+x})=\frac{2x}{2+x},  \forall x>0$;
				\item 	$\log(1+x)\geq\frac{x}{1+x}\frac{2+x}{2},  \text{ if } -1<x\leq 0$.
			\end{enumerate}
		\end{fact}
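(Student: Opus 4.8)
The plan is to prove each of the three bounds by the same elementary device: for each part define the difference $f(x)$ between the left-hand side $\log(1+x)$ and the right-hand side, observe that $f(0)=0$ (every inequality is tight at the origin), and then determine the sign of $f'(x)$ on the stated domain. Since $\log$ and each right-hand side are smooth on $(-1,\infty)$, this reduces each claim to a one-variable monotonicity argument anchored at $x=0$, and I would carry out the three cases in the order (a), (b), (c).

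For part (a), setting $f(x)=\log(1+x)-\frac{x}{1+x}$ and differentiating gives, after clearing the common denominator, $f'(x)=\frac{x}{(1+x)^2}$. This is negative on $(-1,0)$ and positive on $(0,\infty)$, so $x=0$ is the global minimizer of $f$ on $(-1,\infty)$ and $f(x)\ge f(0)=0$. A slicker alternative is to apply the tangent-line bound $\log(1+u)\le u$ (valid for every $u>-1$) to $u=-\frac{x}{1+x}$: since $1+u=\frac{1}{1+x}$ we have $\log(1+u)=-\log(1+x)$, and the bound rearranges to exactly $\log(1+x)\ge \frac{x}{1+x}$.

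For parts (b) and (c), I would repeat the derivative computation. Writing $f(x)=\log(1+x)-\frac{2x}{2+x}$ for (b), combining over the common denominator yields the clean factorization $f'(x)=\frac{x^2}{(1+x)(2+x)^2}\ge 0$ on $(0,\infty)$; hence $f$ increases from $f(0)=0$ and the bound follows. For (c), with $f(x)=\log(1+x)-\frac{x(2+x)}{2(1+x)}$, the analogous computation collapses to $f'(x)=-\frac{x^2}{2(1+x)^2}\le 0$ on $(-1,0]$, so $f$ is decreasing there; the key point is that, because the domain lies to the \emph{left} of the anchor $x=0$, a decreasing $f$ with $f(0)=0$ forces $f(x)\ge 0$ for $x\le 0$, which is the desired direction.

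None of the steps poses a serious obstacle; the only care needed is algebraic. The single genuinely delicate point is the sign bookkeeping in part (c): the same ``$f(0)=0$ plus monotonicity'' template that gives a lower bound on $(0,\infty)$ in (b) would point the wrong way if one forgot that the relevant interval in (c) is $(-1,0]$, so the decreasing behavior of $f$ must be read from right to left. Since the three right-hand sides are successively more accurate rational approximations to $\log(1+x)$ near the origin, the appearance of a perfect monomial in each numerator of $f'$ is the expected outcome and serves as a useful sanity check on the algebra.
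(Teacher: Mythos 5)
Your proof is correct. The paper itself gives no argument for this fact --- it is stated with a citation to Tops{\o}e and left unproved --- so your write-up supplies a self-contained verification rather than an alternative to anything in the text. All three derivative computations check out: $f'(x)=\frac{x}{(1+x)^2}$ for (a), $f'(x)=\frac{x^2}{(1+x)(2+x)^2}$ for (b), and $f'(x)=-\frac{x^2}{2(1+x)^2}$ for (c), and you handle the one subtle point correctly, namely that in (c) the domain $(-1,0]$ lies to the left of the anchor, so a \emph{decreasing} $f$ with $f(0)=0$ still yields $f\ge 0$ there. The substitution trick in (a) (applying $\log(1+u)\le u$ at $u=-\frac{x}{1+x}$) is a nice touch, and your observation that the algebraic identity $\frac{x}{1+x}\bigl(1+\frac{x}{2+x}\bigr)=\frac{2x}{2+x}$ holds is also worth having stated explicitly, since the paper asserts it without comment. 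No gaps.
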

		
		The following inequality will be utilized in the proof of \Cref{lem:likely-ratio}.
		\begin{fact}[Log Sum Inequality]\label{fact:log-sum}
			Let $a_1,\dots ,a_n$ and  $b_1,\dots ,b_n$ be nonnegative numbers, and define $A=\sum_{i=1}^na_i$ and  $B=\sum_{i=1}^nb_i$. Then 
			$$\sum_{i=1}^na_i\log\frac{a_i}{b_i}\geq A\log \frac{A}{B},$$
			where equality holds if and only if $\frac{a_i}{b_i}$ are equal for all $i$.
		\end{fact}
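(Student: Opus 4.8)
The plan is to derive the inequality from a single application of Jensen's inequality to the convex function $f(t)\defeq t\log t$; this is essentially the information inequality (nonnegativity of relative entropy) in disguise. First I would dispose of the degenerate cases using the standard conventions $0\log 0=0$ and $a\log\frac{a}{0}=+\infty$ for $a>0$: if some $b_i=0$ while $a_i>0$ then the left-hand side is $+\infty$ and there is nothing to prove, and if $B=0$ (which forces $A=0$) both sides vanish. So I may henceforth assume $B>0$ and that every index with $a_i>0$ has $b_i>0$.

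For the main case, introduce the ratios $t_i\defeq a_i/b_i$ and write $a_i=b_i t_i$, so that each summand becomes $a_i\log\frac{a_i}{b_i}=b_i\,t_i\log t_i=b_i f(t_i)$ and the left-hand side equals $\sum_{i=1}^n b_i f(t_i)$. Since $f''(t)=1/t>0$ on $(0,\infty)$, the function $f$ is convex, so applying Jensen's inequality with the probability weights $b_i/B$ gives
$$\sum_{i=1}^n \frac{b_i}{B}\,f(t_i)\ \geq\ f\!\left(\sum_{i=1}^n \frac{b_i}{B}\,t_i\right).$$
The decisive simplification is that the argument on the right collapses, since $\sum_{i=1}^n \frac{b_i}{B}\,t_i=\frac{1}{B}\sum_{i=1}^n a_i=\frac{A}{B}$. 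Hence the right-hand side equals $f\!\left(\frac{A}{B}\right)=\frac{A}{B}\log\frac{A}{B}$, and multiplying through by $B>0$ yields precisely $\sum_{i=1}^n a_i\log\frac{a_i}{b_i}\geq A\log\frac{A}{B}$.

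Finally, the equality characterization is inherited directly from the \emph{strict} convexity of $f$ on $(0,\infty)$: Jensen's inequality is tight if and only if all the $t_i=a_i/b_i$ receiving positive weight coincide, which is exactly the stated condition. I do not expect any genuine obstacle here, as the inequality is classical and the proof is a one-line consequence of convexity; the only point requiring care is the bookkeeping of the degenerate cases above, where zeros among the $a_i$ or $b_i$ must be treated with the correct limiting conventions.
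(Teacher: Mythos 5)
Your proof is correct and is the standard argument: Jensen's inequality applied to the convex function $f(t)=t\log t$ with weights $b_i/B$, with the degenerate zero cases handled by the usual conventions. The paper states this fact without proof as a classical result, but its own remark describes the underlying mechanism as ``applying Jensen's inequality to the function $f(x)=x\log x$,'' which is exactly your route, so there is nothing to add.
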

		
		\subsection{Multi-Armed Bandits}\label{ssec:mab}
		
		In this paper, we exclusively consider the stochastic \emph{Multi-Armed Bandit} (MAB) problem, which can be represented by an $n$-dimensional product distribution $\nu = (\nu_1, \nu_2, \dots, \nu_n)$. Each distribution corresponds to an arm. At each round/day $t \in [T]$, the player selects an arm $a_t \in [n]$ and receives a reward $r_t \sim \nu_{a_t}$ independently.
		
		Let $\mu = (\mu_1, \mu_2, \dots, \mu_n)$ denote the mean vector of $\nu$. Define $i^* = \arg\max_{i \in [n]} \mu_i$ as the best arm with the highest mean, and let $\Delta_i = \mu_{i^*} - \mu_i$ represent the mean gap between the best arm and arm $i$. Additionally, let $T_i = \sum_{t=1}^T \ind{a_t = i}$ denote the number of times arm $i$ is pulled. The player's objective is to maximize the accumulated reward $\sum_{t=1}^T r_t$, or equivalently, to minimize the regret, defined as $$R(n,T)=T\mu_{i^*}-\E{\sum_{t=1}^T r_t}=\E{\sum_{i=1}^n\Delta_i T_i},$$ which measures the difference between the accumulated expected reward of the best arm and that of the algorithm. We abbreviate $R(n,T)$ as $R$ when the context is clear and refer to a product distribution $\nu$ as an MAB instance. If each $\nu_i$ is a Bernoulli distribution, we also use $\mu$ to denote an MAB instance.
		
		For the regret of the MAB problem, there exist several algorithms that achieve tight bounds of $\Theta\left(\sqrt{nT}\right)$ up to a constant factor, such as the \emph{Online Stochastic Mirror Descent} (OSMD) or the \emph{Follow the Regularized Leader} (FTRL) algorithm. The following algorithm from \cite{LG21} provides a refined constant factor.
		
		Let $\Delta_{(n-1)}\defeq\allowbreak\set{\*q\in \bb R_{\geq 0}\cmid \sum_{i=1}^n \*q(i)=1}$ denote the probability simplex with dimension $n-1$. Here, $\*q(i)$ represents the value at the $i$-th position of vector $\*q$. Consider a function $F:\bb R^n\to \bb R\cup\set{\infty}$. The Bregman divergence with respect to $F$ is defined as $B_F(\*q,\*p) = F(\*q)-F(\*p) - \inner{\nabla F(\*p)}{\*q-\*p}$ for any $\*q,\*p\in \bb R^n$.
		
		The algorithm proposed in~\cite{LG21} is designed for loss cases, where each pull results in a loss associated with the corresponding arm instead of a reward. To adapt their algorithm to our setting, we can perform a simple reduction by constructing the loss of each arm $\ell_t(i)$ as $1-r_t(i)$, where $r_t(i)$ is the reward of arm $\a_i$. It is straightforward to verify that the results in~\cite{LG21} also hold for the reward setting. Let $\eta$ be the learning rate and $F:\bb R^{\abs{S}}\to \bb R\cup\set{\infty}$ be the potential function, where $S$ is the arm set. Without loss of generality, we index the arms in $S$ by $[\abs{S}]$.
		
		\begin{algorithm}[h]
			\caption{Online Stochastic Mirror Descent}
			\label{algo:OSMD}
			\Input{a set of arms $S$ and the number of rounds $L$}
			\begin{algorithmic}[1]
				\Procedure{\textsc{MirrorDescent}}{{$S,L$}}
				\State $Q_1\gets \arg\min_{\*q\in \Delta_{(\abs{\+S}-1)}} F(\*q)$\;
				\For{$t=1,2\dots, L$}
				\State Sample arm $a_t\sim Q_t$, observe reward $r_t(a_t)$ and let $\ell_t(a_t) = 1-r_t(A_t)$\;
				\State Compute reward estimator $\hat \ell_t$ as 
				\begin{align*}
					\hat \ell_t(i) &= \1{A_t=i}\tp{\ell_t(i) -\frac{1}{2} + \frac{\eta}{8}\tp{1+\frac{1}{Q_t(i) + \sqrt{Q_t(i)}}}} \\
					&\quad- \frac{\eta Q_t(A_t)}{8\tp{ Q_t(i) + \sqrt{Q_t(i)} }}
				\end{align*}
				\State Set $Q_{t+1} = \arg\min_{\*q\in \Delta_{(\abs{\+S}-1)}} \inner{\*q}{\hat\ell_t} + \frac{1}{\eta}\cdot B_F(\*q,Q_t)$\;
				\EndFor
				\EndProcedure
			\end{algorithmic}
		\end{algorithm}
		By choosing $\eta = \sqrt{\frac{8}{L}}$ and $F(\*q) = -2\sum_{i=1}^{\abs{S}} \sqrt{\*q(i)}$, the following proposition can be directly derived from Theorem 11 in~\cite{LG21}.

		\begin{proposition}[\cite{LG21}, Theorem 11]\label{prop:osmd}
			Running the  \Cref{algo:OSMD} on any MAB instance with above parameters, the regret is at most $R(n,T) \leq \sqrt{2nT}$.
		\end{proposition}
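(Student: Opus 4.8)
The statement simply transcribes \cite{LG21}'s regret guarantee for FTRL with the $\frac12$-Tsallis entropy regularizer $F(\*q)=-2\sum_i\sqrt{\*q(i)}$ into our reward convention, so the plan is to invoke that result and check that the reduction is lossless. First I would recall \cite{LG21}'s Theorem 11 in its native loss formulation: with this potential, the learning rate $\eta=\sqrt{8/L}$, and the estimator $\hat\ell_t$ of \Cref{algo:OSMD}, the pseudo-regret against any fixed arm over $L$ rounds with losses in $[0,1]$ is at most $\sqrt{2nL}$. The correction terms appearing in $\hat\ell_t$ are exactly the device \cite{LG21} use to obtain the sharp leading constant, so no reworking of their local-norm analysis is required.

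Second I would verify that the reward-to-loss reduction preserves the regret verbatim. Setting $\ell_t(i)=1-r_t(i)$ keeps losses in $[0,1]$ whenever rewards do, and since $r_t(a_t)=1-\ell_t(a_t)$ and $\E{\ell_t(i)}=1-\mu_i$, one computes $R(n,T)=T\mu_{i^*}-\E{\sum_t r_t(a_t)}=\E{\sum_t\ell_t(a_t)}-\min_i\E{\sum_t\ell_t(i)}$, which is exactly the loss-setting pseudo-regret bounded above. Identifying $L$ with $T$ then gives $R(n,T)\leq\sqrt{2nT}$.

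Were a self-contained argument desired, I would instead run the standard FTRL decomposition into a \emph{penalty} term $\frac{F(\*q^*)-F(Q_1)}{\eta}$ plus a sum of per-round \emph{stability} terms governed by the local norm induced by $\nabla^2 F$. Since $F$ varies over a range of width $\Theta(\sqrt n)$ on $\Delta_{(\abs{S}-1)}$, the penalty is $O(\sqrt n/\eta)$; using that the Hessian scales like $\*q(i)^{-3/2}$ together with the variance control built into $\hat\ell_t$, each stability term is $O(\eta\sqrt n)$, and balancing at $\eta=\sqrt{8/L}$ recovers the $\sqrt{nT}$ order.

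The main obstacle is precisely this stability estimate: confirming that the tailored correction terms in $\hat\ell_t$ hold the local-norm contribution to the claimed constant is the delicate heart of \cite{LG21}'s proof. Because we invoke their theorem as a black box after the reduction, the only genuine work on our side is checking that the substitution $\ell_t=1-r_t$ leaves the estimator and their hypotheses intact, which is the mechanical verification carried out in the first two steps.
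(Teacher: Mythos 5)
Your proposal matches the paper's treatment: the paper likewise invokes Theorem 11 of \cite{LG21} as a black box and justifies the proposition solely via the reward-to-loss reduction $\ell_t(i)=1-r_t(i)$, noting that the regret is preserved verbatim. Your verification that the reduction keeps losses in $[0,1]$ and that $R(n,T)$ equals the loss-setting pseudo-regret is exactly the ``straightforward to verify'' step the paper leaves implicit, so no further comparison is needed.
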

		
		\subsection{Best Arm Identification}
		
		Given an MAB instance $\nu$, the \emph{Best Arm Identification} (BAI) problem aims to identify the arm $i^*$ with the highest mean based on as few samples as possible. Different from the fixed $T$ in MAB, the sample size $T$ here is a stopping time with respect to the filtration $\tp{\+F_t}_{t\in\bb N}$ where $\+F_t=\sigma\tp{a_1,r_1,a_2,r_2,\dots,a_t,r_t}$. In our paper, we focus on  the $(\eps, \delta)$-PAC setting of BAI, denoted as $(\eps, \delta)$-BAI, which requires the algorithm to output an $\eps$-optimal arm with probability at least $1 - \delta$ for any MAB instance. Here, an arm $i$ is considered $\eps$-optimal if $\mu_{i^*} - \mu_i < \eps$.
		
		The first tight bound for $(\eps, \delta)$-BAI was achieved by a median elimination algorithm in \cite{EMM06}. 
		
		\begin{algorithm}[h]
			\caption{Median Elimination(\cite{EMM06})}
			\label{algo:me}
			\Input{a set of arms $S$ of size $n$ and $(\eps,\delta)$}\\
			\Output{an arm}
			\begin{algorithmic}[1]
				\Procedure{\me}{{$\eps,\delta,S$}}
				\State Set $S_1=S,\eps_1=\eps/4,\delta_1=\delta/2, \ell=1$
				\While{$\abs{S_{\ell}}>1$}
				\State Sample every arm $a\in S$ for $\frac{4}{\eps_\ell^2}\log \frac{3}{\delta_\ell}$ times, and let $\hat{p}_a^\ell$ denote its empirical mean
				\State Find the median of $\hat{p}_a^\ell$, denoted by $m_\ell$
				\State Update: $S_{\ell+1}=S_\ell\backslash \set{a: \hat{p}_a^\ell <m_\ell}$
				\State Update: $\eps_{\ell+1}=\frac{3}{4}\eps_\ell, \delta_{\ell+1}=\frac{\delta_\ell}{2},\ell=\ell+1$
				\EndWhile
				\EndProcedure
			\end{algorithmic}
		\end{algorithm}

		\begin{proposition}[\cite{EMM06}, Theorem 10]\label{prop:me}
			Taking $\eps$ and $\delta$, and an arm set $S$ as input, \Cref{algo:me} is an $(\eps, \delta)$-BAI algorithm with sample size $T= \+O\tp{\frac{n}{\eps^2}\log\frac{1}{\delta}}$.
		\end{proposition}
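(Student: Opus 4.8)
The plan is to decompose the argument into a per-round correctness guarantee, a telescoping of the accumulated error, and a geometric bound on the total number of samples. Write $p_a$ for the true mean of arm $a$ and $p^*_\ell = \max_{a\in S_\ell} p_a$ for the best mean surviving into round $\ell$. The central claim I would establish first is the single-round lemma: in round $\ell$, with probability at least $1-\delta_\ell$, the surviving set $S_{\ell+1}$ contains an arm whose mean is at least $p^*_\ell - \eps_\ell$. Granting this, a union bound over the $\lceil\log_2 n\rceil$ rounds shows the total failure probability is at most $\sum_\ell \delta_\ell = \sum_\ell \delta/2^\ell \le \delta$, while the means degrade by at most $\sum_\ell \eps_\ell = \tfrac{\eps}{4}\sum_{\ell\ge 0}(3/4)^\ell = \eps$; hence the single arm output at termination is $\eps$-optimal with probability at least $1-\delta$, which is exactly the $(\eps,\delta)$-PAC requirement.

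For the single-round lemma I would run a concentration-plus-Markov argument. The number of pulls per arm, $n_\ell = \tfrac{4}{\eps_\ell^2}\log\tfrac{3}{\delta_\ell}$, is calibrated so that Hoeffding's inequality gives, for each fixed arm $a$, both $\Pr[\hat p_a^\ell \le p_a - \eps_\ell/2]\le \delta_\ell/3$ and $\Pr[\hat p_a^\ell \ge p_a + \eps_\ell/2]\le \delta_\ell/3$. Let $a^*$ be a best arm in $S_\ell$ and let $E$ be the event $\hat p_{a^*}^\ell \ge p^*_\ell - \eps_\ell/2$, so $\Pr[\neg E]\le \delta_\ell/3$. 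Call an arm $j$ \emph{bad} if $p_j < p^*_\ell - \eps_\ell$. On $E$, every bad arm satisfies $\hat p_{a^*}^\ell \ge p_j + \eps_\ell/2$, so the probability that a fixed bad arm's empirical mean exceeds $\hat p_{a^*}^\ell$ is at most $\delta_\ell/3$. Letting $B$ count the bad arms with $\hat p_j^\ell \ge \hat p_{a^*}^\ell$, linearity of expectation yields $\E{B \mid E}\le |S_\ell|\delta_\ell/3$.

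The observation that converts this into the lemma is that the bad event — no arm of mean at least $p^*_\ell-\eps_\ell$ survives — forces $a^*$ to be eliminated (since $a^*$ is not bad), and then forces all of the roughly $|S_\ell|/2$ survivors to be bad arms lying above the median $m_\ell$, hence strictly above $\hat p_{a^*}^\ell$; in particular $B \ge |S_\ell|/2$. Markov's inequality then gives $\Pr[B\ge |S_\ell|/2 \mid E]\le \tfrac{2\delta_\ell}{3}$, and combining with $\Pr[\neg E]\le \delta_\ell/3$ bounds the round failure probability by $\delta_\ell$, as claimed.

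Finally, for the sample complexity I would sum $|S_\ell|\,n_\ell$ over rounds, using $|S_\ell| = n/2^{\ell-1}$, $\eps_\ell = \tfrac{\eps}{4}(3/4)^{\ell-1}$, and $\delta_\ell = \delta/2^\ell$. The per-round count is proportional to $\tfrac{n}{2^{\ell-1}}\cdot\tfrac{1}{(\eps/4)^2(3/4)^{2(\ell-1)}}\log\tfrac{3\cdot 2^\ell}{\delta}$, whose prefactor scales like $(9/8)^{-(\ell-1)}$; this decaying geometric series with ratio $8/9<1$ dominates the merely linear growth of the $\log(3\cdot 2^\ell/\delta)$ factor, so the whole sum collapses to $\+O\tp{\tfrac{n}{\eps^2}\log\tfrac{1}{\delta}}$. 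I expect the main obstacle to be the single-round lemma — specifically, getting the conditioning right so that the Markov bound on $B$ is legitimate and correctly linking the bad event ``$a^*$ eliminated with only bad survivors'' to ``$B$ large''; the calibration of $n_\ell$ and the two geometric sums are then routine.
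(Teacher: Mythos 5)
The paper does not prove this proposition; it is quoted as Theorem 10 of \cite{EMM06}, and your argument is a faithful reconstruction of the standard analysis from that reference: the single-round lemma via Hoeffding plus Markov applied to the count of bad arms whose empirical mean beats $\hat p_{a^*}^\ell$, the telescoping $\sum_\ell \eps_\ell = \eps$ and $\sum_\ell \delta_\ell \le \delta$, and the geometrically collapsing sum $\sum_\ell |S_\ell|\, n_\ell$ with ratio $8/9$ dominating the linear growth of $\log(3\cdot 2^\ell/\delta)$. The only two points worth making explicit when writing it out are that bounding the conditional probability of $\hat p_j^\ell \ge p_j + \eps_\ell/2$ given $E$ by $\delta_\ell/3$ relies on arm $j$'s samples being independent of $E$ (which is measurable with respect to $a^*$'s samples alone), and that the halving $|S_{\ell+1}| \approx |S_\ell|/2$ used in both the Markov step and the sample count requires a tie-breaking convention at the median --- both standard and easily supplied.
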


		\subsection{Best Arm Retention}\label{sec:prelim-BAR}
		
		Given an MAB instance $\nu$, the \emph{Best Arm Retention} (BAR) problem involves retaining $m$ arms $S_T$ out of $n$ after $T$ samples, ensuring that the best arm is included in the retained set $i^*\in S_T$. Similar to BAI, in our paper the sample size $T$ is a  stopping time with respect to $\tp{\+F_t}_{t\in\bb N}$. Correspondingly, the $(\eps, \delta)$-PAC version of BAR, denoted as $(\eps, \delta)$-BAR, requires that the retained $m$ arms contain at least one $\eps$-optimal arm with probability at least $1 - \delta$.
		
		In practice, to achieve low regret in a multiple-pass streaming algorithm, it suffices for the \emph{expected gap}\footnote{We use \emph{mean gap} to refer to the mean difference between $i^*$ and the other fixed arm, and \emph{expected  gap} to denote the expected difference in means between $i^*$ and the optimal arm of an arm subset, where the randomness of the expectation arises from the arm subset.} between the mean of the best arm and the optimal arm in the retained $m$ arms to be small. We define $r$-BAR as the problem to guarantee that the expected gap is at most $r$, namely $\E{\mu_{i^*} - \max_{i \in S_T} \mu_i }< r.$
		
		The regret for a $r$-BAR algorithm, $R(n)=T\mu_{i^*}-\E{\sum_{t=1}^T r_t}$, is similar to the regret defined in \Cref{ssec:mab} except that $T$ is a stopping time.  When referring to the sample (or regret) complexity of $r$-BAR, we denote the minimum samples (or regret) required by any algorithm capable of solving $r$-BAR for any MAB instance.

		\section{An $(\eps,\delta)$-PAC Algorithm for BAR}
		
		In this section, we will design a simple algorithm with the assistance of the median elimination algorithm to establish an upper bound, followed by a lower bound based on likelihood ratio.

		\subsection{Upper Bound for $(\eps, \delta)$-BAR}
		
		Our algorithm presented in Algorithm \ref{algo:bar} is straightforward. We first uniformly at random choose $n-m+1$ arms from the set of $n$ arms. Next, we execute the \me algorithm (\Cref{algo:me}) to obtain an $\eps$-optimal arm (with respect to the chosen arms) with probability $1 - \frac{n}{n-m+1}\cdot\delta$. Finally, we output this arm along with the remaining $m-1$ arms that were not chosen in the first stage.
		
		\begin{algorithm}[ht]
			\caption{$(\eps, \delta)$-PAC Algorithm for BAR}
			\label{algo:bar} 
			\Input{$\eps, \delta, m$, arm set $S$}\\
			\Output{$m$ arms}
			\begin{algorithmic}[1]
				\State Choose $n-m+1$ arms denoted as $S^\prime$ from $n$ arms uniformly at random.
				\State Run the median elimination algorithm $$i^\prime = \me(\eps, \frac{n}{n-m+1}\delta, S^\prime).$$
				\State \Return $S \setminus S^\prime \cup \{i^\prime\}$.
			\end{algorithmic}
		\end{algorithm}
		
		\begin{theorem}[Part of Theorem \ref{thm:bar}]
			Algorithm \ref{algo:bar} is an $(\eps, \delta)$-BAR algorithm with a sample size of $$\+O\tp{\frac{n-m+1}{\eps^2}\log\frac{n-m+1}{n\delta}}.$$
		\end{theorem}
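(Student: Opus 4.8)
The plan is to establish correctness by conditioning on whether the best arm $i^*$ survives the initial uniform subsampling, and then to read the sample complexity off directly from \Cref{prop:me}. The key structural observation is that the output $S \setminus S^\prime \cup \set{i^\prime}$ always has exactly $m$ elements --- the $m-1$ arms discarded in the first stage, together with the single arm $i^\prime$ returned by median elimination --- so it is always a valid BAR output, and whether it contains an $\eps$-optimal arm depends only on the location of $i^*$.

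First I would split into two cases according to whether $i^* \in S^\prime$. If $i^* \notin S^\prime$, then $i^*$ itself lies in $S \setminus S^\prime$ and is retained; since $\mu_{i^*} - \mu_{i^*} = 0 < \eps$, the retained set contains an $\eps$-optimal arm with certainty and the algorithm cannot fail. If $i^* \in S^\prime$, then $i^*$ is the best arm of the sub-instance restricted to $S^\prime$, so by \Cref{prop:me} the call $\me(\eps, \frac{n}{n-m+1}\delta, S^\prime)$ returns an arm $i^\prime$ with $\mu_{i^*} - \mu_{i^\prime} < \eps$ with probability at least $1 - \frac{n}{n-m+1}\delta$; on this event $i^\prime$ is an $\eps$-optimal arm in the retained set.

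Next I would combine the two cases. Because the first-stage selection is uniform, $\Pr[i^* \in S^\prime] = \frac{n-m+1}{n}$, and a failure can occur only in the second case. Hence the overall failure probability is at most $\frac{n-m+1}{n} \cdot \frac{n}{n-m+1}\delta = \delta$. This is exactly why the confidence handed to median elimination is inflated to $\frac{n}{n-m+1}\delta$: the looser failure budget it is allowed is precisely cancelled by the probability that $i^*$ was subsampled into $S^\prime$ at all. The sample cost is incurred solely by the single median elimination call on the $(n-m+1)$-arm set $S^\prime$, so substituting $\abs{S^\prime} = n-m+1$ and confidence $\frac{n}{n-m+1}\delta$ into \Cref{prop:me} gives $\+O\tp{\frac{n-m+1}{\eps^2}\log\frac{n-m+1}{n\delta}}$.

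The proof is short, and the only delicate point is the bookkeeping of the confidence level rather than any analytic difficulty. I would make sure to check that the inflated parameter $\frac{n}{n-m+1}\delta$ is a valid probability in $(0,1)$ --- this is where the hypothesis $\delta \le \frac{n-m}{n}(1-\beta)$ is used, since it gives $\frac{n}{n-m+1}\delta < 1$ and also keeps the logarithmic factor positive --- and that the product with $\Pr[i^* \in S^\prime]$ telescopes back to exactly $\delta$. Everything else is a direct application of the median elimination guarantee.
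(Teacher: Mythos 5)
Your proposal is correct and follows essentially the same argument as the paper: condition on whether $i^*$ lands in $S^\prime$, invoke the median elimination guarantee with the inflated confidence $\frac{n}{n-m+1}\delta$, and observe that the product $\frac{n-m+1}{n}\cdot\frac{n}{n-m+1}\delta$ telescopes to $\delta$, with the sample cost coming entirely from the single call to \me. Your added check that the inflated confidence parameter stays below $1$ is a reasonable extra care the paper leaves implicit.
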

		
		\begin{proof}
			Our algorithm fails if and only if: 
			\begin{itemize}
				\item[(1)] the optimal arm was chosen in the first stage, and 
				\item[(2)] the procedure \me  failed to return a nearly optimal arm.
			\end{itemize}
			Therefore, the failure probability is given by 
			$$\frac{n-m+1}{n} \cdot \frac{n}{n-m+1}\cdot\delta = \delta.$$
			Furthermore,  by \Cref{prop:me} the sample complexity due to the \me procedure
			$\+O\tp{\frac{n-m+1}{\eps^2}\log\frac{n-m+1}{n\delta}}$.
		\end{proof}
		
		
		\subsection{Lower Bounds}\label{subsec:PAC-lb}
		Recall the mean vector $\mu=(\mu_1, \mu_2,\cdots, \mu_n)$ denotes a MAB instance, where each arm $i$ follows a Bernoulli distribution with mean $\mu_i$.  Consider the following $n$ instances: $\@H_1=(\frac{1}{2}+\eps,\frac{1}{2},\dots,\frac{1}{2})$, and for $j\neq 1$, $\@H_j$ differs from $\@H_1$ only in $\@H_j(j)=\frac{1}{2}+2\eps$. We use $\Pr[i]{\cdot}$ and $\E[i]{\cdot}$ to denote probability and expectation of the algorithm running on instance $\@H_i$.
		
		At a high level, if an algorithm outputs arm $j$ as the best arm with a higher probability in $\@H_j$ than in $\@H_1$, then this algorithm, to some extent, distinguishes the two instances,  indicating that arm $j$ should be pulled enough times.

		The well-known lower bound for $(\eps,\delta)$-BAI, $\Omega\tp{\frac{n}{\eps^2}\log\frac{1}{\delta}}$, is tight but with the restriction of $\delta<0.5$~\cite{MT04}. The lower bound proof techniques from previous literature (e.g. ~\cite{MT04}) are mainly based on the following observation: given two instances $\@H_1$ and $\@H_j$, any proper algorithm should retain arm $j$ with probability at least $1-\delta$ on $\@H_j$, but at most $\delta$ on $\@H_1$.  When $1-\delta>\delta$, enough pulls  for arm $j$ are required to distinguish between these two instances, which necessitates $\delta<\frac{1}{2}$. However, when $\delta \geq \frac{1}{2}$, a more refined argument is required.

		We will begin with a warm-up lower bound for BAR with $m=1$ (or equivalently, BAI) to demonstrate how to eliminate this restriction in \Cref{thm:bai}. For $\delta\geq0.5$, we know that $\Theta\tp{\frac{n}{\eps^2}\log\frac{1}{\delta}}=\Theta\tp{\frac{(1-\delta)n}{\eps^2}}$. Therefore our result shows that the lower bound  $\Omega\tp{\frac{n}{\eps^2}\log\frac{1}{\delta}}$ holds for almost all feasible $\delta$. Subsequently, we will extend this method to BAR with general $m$.
		
		\subsubsection{BAI: Warm-up}
		\begin{theorem}\label{thm:bai}
			For any  $(\eps,\delta)$-BAI algorithm satisfying $1-\delta= \frac{1+\Omega(1)}{n}$ and $\eps\leq \frac{1}{8}$, the  sample size when running on $\@H_1$ is 
			$$\E[1]{T}=\Omega\tp{\frac{(1-\delta)n-1}{\eps^2}}.$$
		\end{theorem}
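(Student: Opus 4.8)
The plan is to run a change-of-measure argument against the $n$ hypotheses $\@H_1,\dots,\@H_n$, summing the information the algorithm must collect about each suboptimal arm. Write $A_j$ for the event that the algorithm outputs arm $j$ and set $p_j\defeq\Pr[1]{A_j}$. Since the $A_j$ are disjoint and $(\eps,\delta)$-correctness forces $\Pr[1]{A_1}\ge1-\delta$, we get $\sum_{j\ne1}p_j\le\delta$; on the other hand, on $\@H_j$ the unique $\eps$-optimal arm is $j$, so $q_j\defeq\Pr[j]{A_j}\ge1-\delta$. Because $\@H_1$ and $\@H_j$ differ only at arm $j$ (mean $\tfrac12$ versus $\tfrac12+2\eps$), the likelihood-ratio decomposition (\Cref{lem:likely-ratio}) gives that the KL divergence $D_j$ between the two observation laws equals $\E[1]{T_j}\,\fd(\tfrac12,\tfrac12+2\eps)$, while the data-processing inequality applied to the binary test $A_j$ yields $D_j\ge\fd(p_j,q_j)$.

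First I would reduce everything to $\@H_1$: since $\E[1]{T}\ge\sum_{j\ne1}\E[1]{T_j}$ and $\fd(\tfrac12,\tfrac12+2\eps)\le C\eps^2$ for $\eps\le\tfrac18$ (a constant upper bound via $\log(1+x)\le x$), it suffices to lower bound $\sum_{j\ne1}\fd(p_j,q_j)$. To neutralize arms on which the algorithm places large output mass under $\@H_1$, I would introduce the truncation $\phi(p)\defeq\fd(p,1-\delta)$ for $p\le1-\delta$ and $\phi(p)\defeq0$ for $p>1-\delta$, which by \Cref{fact:kl} is convex and nonincreasing on $[0,1]$. Using $q_j\ge1-\delta$ together with the monotonicity in \Cref{fact:kl}(b) gives $\fd(p_j,q_j)\ge\phi(p_j)$ for every $j$, and then Jensen's inequality with $\sum_{j\ne1}p_j\le\delta$ yields
\[
\sum_{j\ne1}\fd(p_j,q_j)\ \ge\ (n-1)\,\phi\!\tp{\tfrac{1}{n-1}\sum_{j\ne1}p_j}\ \ge\ (n-1)\,\fd\!\tp{\tfrac{\delta}{n-1},\,1-\delta},
\]
the last step using $\tfrac{\delta}{n-1}\le1-\delta$ (which holds whenever $(1-\delta)n\ge1$) and that $\phi$ is nonincreasing. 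This truncation is precisely what lets the adversarial allocation of output probabilities be handled uniformly, so I never have to track how many arms have $p_j>1-\delta$.

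The hard part will be the remaining estimate $(n-1)\fd\tp{\tfrac{\delta}{n-1},1-\delta}=\Omega\tp{(1-\delta)n-1}$. Writing $t\defeq(1-\delta)n$, $a\defeq\tfrac{\delta}{n-1}$ and $b\defeq1-\delta$, I would split the divergence into its two logarithmic terms. The term $(n-1)(1-a)\log\frac{1-a}{1-b}$, bounded below by \Cref{fact:log-ineq}(a), contributes at least $(n-1)(b-a)=t-1$; the negative term $(n-1)a\log\frac ab=\delta\log\frac{n-t}{(n-1)t}$ is handled by applying \Cref{fact:log-ineq}(a) to $\log\frac{n-t}{n-1}$, giving at least $-\tfrac{t-1}{n}-(1-\tfrac tn)\log t$. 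Summing, $(n-1)\fd\tp{a,b}\ge(t-1)(1-\tfrac1n)-(1-\tfrac tn)\log t$. The delicate point, and the reason one cannot be lossy, is that the two first-order contributions $t-1$ and $-\log t$ nearly cancel, leaving a surplus of order $(t-1)-\log t$, which is only $\Theta\bigl((t-1)^2\bigr)$ as $t\downarrow1$. This is exactly where the hypothesis $1-\delta=\tfrac{1+\Omega(1)}{n}$, i.e.\ $t\ge1+c_0$, is used: since $h(t)=\tfrac{\log t}{t-1}$ is strictly decreasing, $\tfrac{\log t}{t-1}\le\gamma<1$ on $t\ge1+c_0$, so for $n$ large the expression is at least $\tfrac{1-\gamma}{2}(t-1)=\Omega\tp{(1-\delta)n-1}$. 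Combining this with $\fd(\tfrac12,\tfrac12+2\eps)=O(\eps^2)$ gives $\E[1]{T}=\Omega\tp{\tfrac{(1-\delta)n-1}{\eps^2}}$, as claimed; the near-cancellation above also explains why the bound must degrade as $1-\delta\to\tfrac1n$.
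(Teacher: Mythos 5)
Your proof is correct, but it follows a genuinely different route from the paper's own proof of this theorem. The paper's warm-up argument is a threshold argument: it collects a set $B$ of at least $k$ arms whose output probability under the base instance is at most $\frac{\delta}{n-k}$, applies \Cref{lem:likely-ratio} to each, and then tunes $k$ so that $\frac{\delta}{n-k}=\frac{1-\delta}{2}+\frac{1}{2n}$, finishing with the ad hoc estimate of \Cref{lem:bai-kl}. You instead avoid any threshold by passing to the truncated divergence $\phi(p)=\fd(p,1-\delta)\cdot\mathbb{1}[p\le 1-\delta]$, which is indeed convex and nonincreasing (it is $C^1$ at $p=1-\delta$ since $\fd(\cdot,b)$ has vanishing derivative at $b$), and applying Jensen with the constraint $\sum_{j\ne 1}p_j\le\delta$ to get $(n-1)\,\fd\bigl(\frac{\delta}{n-1},1-\delta\bigr)$ directly. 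This is essentially \Cref{lem:kl-sum} specialized to $m=1$ — i.e., you jumped straight to the refinement the paper only introduces for general BAR, where it explicitly remarks that the single-threshold classification is "too pessimistic." Your closing computation, splitting $(n-1)\fd\bigl(\frac{\delta}{n-1},1-\delta\bigr)$ into the two logarithmic terms via \Cref{fact:log-ineq}(a) and exploiting that $\frac{\log t}{t-1}$ is bounded away from $1$ when $t=(1-\delta)n\ge 1+\Omega(1)$, is a valid replacement for \Cref{lem:bai-kl} and correctly isolates where the hypothesis on $\delta$ is used (the near-cancellation $(t-1)-\log t=\Theta((t-1)^2)$ as $t\downarrow 1$). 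The only cosmetic imprecision is the phrase "for $n$ large": the bracket $(1-\frac1n)-\gamma(1-\frac tn)$ is in fact bounded below by $\min\bigl(1-\gamma,\,\frac{c_0}{1+c_0}\bigr)$ for every $n\ge 2$ (split on whether $\gamma t\ge 1$ and use $n\ge t$), so no largeness assumption is needed. What your route buys is uniformity — no case analysis on how the adversary distributes the output mass and a statement that generalizes verbatim to $m>1$; what the paper's warm-up buys is a gentler exposition that motivates why \Cref{lem:kl-sum} is needed later.
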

		We will prove \Cref{thm:bai} in this part.
		Given any algorithm $\+A$, let $$B=\set{i\in\set{2,\dots,n}:\Pr[1]{\+A \text{ outputs arm } i}\leq \frac{\delta}{n-k}},$$ where $k$ is an integer to be determined later such that $\frac{\delta}{n-k}\leq 1-\delta$. It is evident that $\abs{B}\geq k$. Otherwise, there are $n-k$ arms $j\in \set{2,\dots,n-1}$ satisfying $\Pr[1]{\+A\mbox{ outputs arm } j}>\frac{\delta}{n-k}$ and this contradicts that $\+A$ is an $(\eps,\delta)$-PAC algorithm. However, for each $i\in B$, we have $\Pr[i]{\+A \text{ outputs arm } i}\geq 1-\delta$. The following lemma obtained by likelihood ratio shows that arm $i$ must be pulled enough times. The proof is given here for completeness.
		\begin{lemma}[\cite{KCG16}, Lemma 1]\label{lem:likely-ratio}
			For any two MAB instances $\mu, \mu^\prime$ with $n$ arms, and for any algorithm with almost-surely finite stopping time $T$ and event $\+E\in \+F_T$, 
			$\sum_{i=1}^n \tp{\E[\mu]{T_i}\cdot \fd(\mu_i,\mu^\prime_i)}\geq 
			\fd(\Pr[\mu]{\+E}, \Pr[\mu^\prime]{\+E}).$
		\end{lemma}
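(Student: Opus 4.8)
The plan is to combine a change-of-measure identity with a data-processing step obtained from the log sum inequality (\Cref{fact:log-sum}). Writing the observed trajectory up to the stopping time as $(a_1,r_1,\dots,a_T,r_T)$, I would introduce the log-likelihood ratio $L_T=\sum_{t=1}^T\log\frac{\nu_{a_t}(r_t)}{\nu'_{a_t}(r_t)}$. Because the algorithm $\+A$ chooses $a_t$ as a (possibly randomized) function of the history $\+F_{t-1}$ alone, independently of which instance generated the data, the arm-selection factors cancel between the two trajectory laws, so $L_T$ is exactly the log-ratio of the densities of $\Pr[\mu]{\cdot}$ and $\Pr[\mu']{\cdot}$ restricted to $\+F_T$. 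Consequently the KL divergence between these two restricted laws equals $\E[\mu]{L_T}$. (In the Bernoulli setting all means lie in $(0,1)$, so the two laws are mutually absolutely continuous and this quantity is well defined.)

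Next I would evaluate $\E[\mu]{L_T}$ by a Wald-type identity. Since $a_t$ is $\+F_{t-1}$-measurable and $r_t\sim\nu_{a_t}$ under $\mu$, each increment satisfies $\E[\mu]{\log\frac{\nu_{a_t}(r_t)}{\nu'_{a_t}(r_t)}\,\middle|\,\+F_{t-1}}=\fd(\mu_{a_t},\mu'_{a_t})$. Hence $M_t=L_t-\sum_{s\le t}\fd(\mu_{a_s},\mu'_{a_s})$ is a martingale under $\Pr[\mu]{\cdot}$, and optional stopping at the almost-surely finite stopping time $T$ gives
\[
\E[\mu]{L_T}=\E[\mu]{\sum_{t=1}^T\fd(\mu_{a_t},\mu'_{a_t})}=\sum_{i=1}^n\E[\mu]{T_i}\,\fd(\mu_i,\mu'_i),
\]
where the last step groups the pulls by arm, using that each pull of arm $i$ contributes $\fd(\mu_i,\mu'_i)$.

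For the lower bound I would invoke the data-processing inequality in its elementary form. The indicator $\ind{\+E}$ is $\+F_T$-measurable, so its laws under $\mu$ and $\mu'$ are the coarsenings of the full trajectory laws onto the two-block partition $\set{\+E,\+E^c}$. Applying the log sum inequality (\Cref{fact:log-sum}) separately to the masses on $\+E$ and on $\+E^c$ shows that the KL divergence between the full $\+F_T$-laws dominates the binary KL divergence of the coarsening, namely $\E[\mu]{L_T}\ge\fd\tp{\Pr[\mu]{\+E},\Pr[\mu']{\+E}}$. Chaining this with the Wald identity of the previous paragraph yields precisely the claimed inequality.

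The main obstacle I anticipate is the measure-theoretic bookkeeping rather than any individual inequality: I must justify the change of measure and the optional-stopping step at the unbounded (though almost-surely finite) stopping time $T$. The clean route is to establish everything first for the bounded stopping times $T\wedge N$, where the martingale computation and the absolute-continuity statement are immediate, and then pass to the limit $N\to\infty$, using $T<\infty$ almost surely together with monotone and dominated convergence to recover both the identity and the inequality for $T$ itself.
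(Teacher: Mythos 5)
Your proposal is correct and follows essentially the same route as the paper's proof: decompose the log-likelihood ratio (arm-selection factors cancel), evaluate its expectation via Wald's identity to get $\sum_i\E[\mu]{T_i}\fd(\mu_i,\mu_i')$, and lower-bound it by the binary divergence $\fd(\Pr[\mu]{\+E},\Pr[\mu']{\+E})$ using the log sum inequality on $\+E$ and its complement. Your extra remark about first working with the truncated stopping times $T\wedge N$ is a reasonable way to make the optional-stopping step rigorous, but it does not change the substance of the argument.
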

		\begin{proof}
			Let $T_i(s)$ denote the index of the $s$-th pull of arm $i$ for $s\leq T_i$. Define the log-likelihood ratio $L_T(a_1,r_1,a_2,r_2,\dots,a_T,r_T)=\log\frac{\Pr[\mu]{a_1,r_1,a_2,r_2,\dots,a_T,r_T}}{\Pr[\mu^\prime]{a_1,r_1,a_2,r_2,\dots,a_T,r_T}}$, abbreviated as $L_T$ when the context is clear.   By applying the chain rule to $L_T$, we have 
			\begin{align*}
				L_T&= \log \frac{\prod_{t=1}^T\Pr[\mu]{a_t\vert \+F_{t-1}}\cdot \Pr[\mu]{r_t\vert \+F_{t-1}, a_t}}{\prod_{t=1}^T\Pr[\mu^\prime]{a_t\vert \+F_{t-1}}\cdot \Pr[\mu^\prime]{r_t\vert \+F_{t-1}, a_t}}\\
				&=\sum_{t=1}^T\log\frac{\Pr[\mu]{r_t\vert a_t}}{\Pr[\mu^\prime]{r_t\vert a_t}} =\sum_{i=1}^n\sum_{s=1}^{T_i}\log\frac{\Pr[\mu]{r_{T_i(s)}\vert a_{T_i(s)}}}{\Pr[\mu^\prime]{r_{T_i(s)}\vert a_{T_i(s)}}},
			\end{align*}
			where the second equality follows from $\Pr[\mu]{a_t\vert \+F_{t-1}}=\Pr[\mu^\prime]{a_t\vert \+F_{t-1}}$ and that $r_t$ is independent of $\+F_{t-1}$ conditioned on $a_t$.
			With $\E[\mu]{ \log\frac{\Pr[\mu]{r_{T_i(s)}\vert a_{T_i(s)}}}{\Pr[\mu^\prime]{r_{T_i(s)}\vert a_{T_i(s)}}}}=\fd(\mu_i,\mu_i^\prime)$, we apply Wald's Lemma (see e.g. \cite{siegmund13}) to $\sum_{i=1}^n\sum_{s=1}^{T_i}\log\frac{\Pr[\mu]{r_{T_i(s)}\vert a_{T_i(s)}}}{\Pr[\mu^\prime]{r_{T_i(s)}\vert a_{T_i(s)}}}$ to obtain:
			\begin{equation}\label{eqn:likeli-lhs}
				\E[\mu]{L_T} =\sum_{i=1}^n\E[\mu]{T_i}\fd\tp{\mu_i,\mu_i^\prime}.
			\end{equation}
			
			The remaining task is to  prove $\E[\mu]{L_T}\geq \fd(\Pr[\mu]{\+E},\Pr[\mu^\prime]{\+E})$ for any event $\+E\in\+F_T$.
			%
			Apply \Cref{fact:log-sum} to \[\E[\mu]{L_T\vert \+E}=\sum_{\omega\in \+E}\frac{\Pr[\mu]{\omega}}{\Pr[\mu]{\+E}}\log \frac{\Pr[\mu]{\omega}}{\Pr[\mu']{\omega}}\] to obtain $\E[\mu]{L_T\vert \+E}\geq \log\frac{\Pr[\mu]{\+E}}{\Pr[\mu^\prime]{\+E}}$.
			Similarly, $\E[\mu]{L_T\vert \bar{\+E}}\geq \log\frac{\Pr[\mu]{\bar{\+E}}}{\Pr[\mu^\prime]{\bar{\+E}}}$.
			Hence, we conclude
			\begin{align*}
				\E[\mu]{L_T}&=\Pr[\mu]{\+E} \E[\mu]{L_T\vert \+E}+\Pr[\mu]{\bar{\+E}}\E[\mu]{L_T\vert \bar{\+E}}\\
				&\geq \Pr[\mu]{\+E} \log\frac{\Pr[\mu]{\+E}}{\Pr[\mu^\prime]{\+E}}+  \Pr[\mu]{\bar{\+E}} \log\frac{\Pr[\mu]{\bar{\+E}}}{\Pr[\mu^\prime]{\bar{\+E}}}\\
				&=\fd(\Pr[\mu]{\+E},\Pr[\mu^\prime]{\+E}),
			\end{align*}
			which completes our proof in conjunction with \Cref{eqn:likeli-lhs}.
			
		\end{proof}

		Here we only consider the algorithm with almost-surely finite stopping time. Otherwise  the sample complexity is infinite and the theorem obviously holds.  Therefore, for any $i\in B$, we can apply \Cref{lem:likely-ratio} to instances $\@H_1$ and $\@H_i$ with $\+E_i=\set{\+A \text{ outputs arm } i}$ to obtain
		$$\E[1]{T_i}\cdot \fd\tp{0.5,0.5+2\eps}\geq \fd\tp{\Pr[1]{\+E_i}, \Pr[i]{\+E_i}}.$$
		Since $\fd(0.5,0.5+2\eps)\leq 12\eps^2$ and $\fd\tp{\Pr[1]{\+E_i}, \Pr[i]{\+E_i}}\geq  \fd\tp{\frac{\delta}{n-k},1-\delta}$ by \Cref{fact:kl}, then
		$$12\eps^2\cdot\E[1]{T_i}\geq \fd\tp{\frac{\delta}{n-k},1-\delta}.$$
		Summing up all $i\in B$, we have 
		$$12\eps^2\E[1]{T}\geq k\cdot \fd\tp{\frac{\delta}{n-k},1-\delta}.$$
		Here we choose $k=n-\frac{\delta}{\frac{1-\delta}{2}+\frac{1}{2n}}=\Omega(n)$, and thus $\frac{\delta}{n-k}=\frac{1-\delta}{2}+\frac{1}{2n}$. 
		The following lemma assists us in bounding the KL divergence:
		\begin{lemma}\label{lem:bai-kl}
			$\fd(\frac{1-\delta}{2}+\frac{1}{2n},1-\delta)=\Omega\tp{\frac{1-\delta}{2}-\frac{1}{2n}}$ if $1-\delta=\frac{1+\Omega(1)}{n}$.
		\end{lemma}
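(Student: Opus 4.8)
The plan is to abbreviate $q\defeq 1-\delta$ and $p\defeq \frac q2+\frac1{2n}$, so that the claim reads $\fd(p,q)=\Omega\tp{\frac q2-\frac1{2n}}$, and to lower bound the two summands of $\fd(p,q)=p\log\frac pq+(1-p)\log\frac{1-p}{1-q}$ separately using the elementary logarithmic inequalities of \Cref{fact:log-ineq}. The hypothesis $1-\delta=\frac{1+\Omega(1)}{n}$ means $nq\ge 1+\Omega(1)>1$, so $p<q$; hence the first summand is negative and the second is positive, and the whole task is to show the positive contribution dominates by a constant factor.

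First I would handle the first summand by writing $\log\frac pq=\log\tp{1+\frac{p-q}{q}}$, where $\frac{p-q}{q}$ lies in $(-1,0)$ because $0<p<q$. Applying part (c) of \Cref{fact:log-ineq} and simplifying the two factors (they become $\frac{p-q}{p}$ and $\frac{p+q}{2q}$) yields $p\log\frac pq\ge \frac{p^2-q^2}{2q}$. For the second summand I would instead write $\log\frac{1-p}{1-q}=\log\tp{1+\frac{q-p}{1-q}}$ with $\frac{q-p}{1-q}>0$ and apply part (a), which after simplification gives $(1-p)\log\frac{1-p}{1-q}\ge q-p$. Adding the two bounds produces the clean intermediate inequality $\fd(p,q)\ge \frac{p^2}{2q}+\frac q2-p$.

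Next I would substitute $p=\frac q2+\frac1{2n}=\frac{nq+1}{2n}$. The linear terms collapse, $\frac q2-p=-\frac1{2n}$, and after placing everything over the common denominator $8n^2q$ the numerator becomes a perfect square, giving $\fd(p,q)\ge \frac{(nq-1)^2}{8n^2q}$. It then remains only to compare this with the target $\frac q2-\frac1{2n}=\frac{nq-1}{2n}$: their ratio is $\frac{nq-1}{4nq}=\frac14\tp{1-\frac1{nq}}$. The hypothesis $nq=1+\Omega(1)$ forces $\frac1{nq}$ to be bounded away from $1$, so this ratio is a positive constant, which delivers $\fd(p,q)=\Omega\tp{\frac q2-\frac1{2n}}$ as required.

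The main obstacle is not any single calculation but the recognition that the two halves of the Bernoulli KL must be bounded with \emph{different} inequalities — the sharper tangent-type bound (c) on the decreasing side $\log\frac pq$ and the weaker (a) on the increasing side $\log\frac{1-p}{1-q}$ — so that the negative and positive parts combine into the single perfect square $(nq-1)^2$ rather than cancelling into something useless. Once that is arranged, the role of the assumption $1-\delta=\frac{1+\Omega(1)}{n}$ becomes transparent: it is exactly what bridges the quadratic lower bound $(nq-1)^2$ and the linear target $(nq-1)$, since $\frac{nq-1}{nq}$ is bounded below precisely when $nq$ is bounded away from $1$. If instead $nq\to 1$ this ratio would tend to $0$ and the claimed bound would fail, so the hypothesis cannot be dropped.
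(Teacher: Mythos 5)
Your proof is correct and follows essentially the same approach as the paper: both lower-bound the Bernoulli KL term by term using the elementary tangent-type inequalities of \Cref{fact:log-ineq} (the sharper bound on the negative summand, the weaker one on the positive summand) and then invoke $n(1-\delta)=1+\Omega(1)$ to keep the resulting constant bounded away from zero. Your bookkeeping is in fact tidier than the paper's — collapsing the bound to the perfect square $\frac{(nq-1)^2}{8n^2q}$ makes the comparison with the target $\frac{nq-1}{2n}$ and the necessity of the hypothesis completely transparent — but the underlying argument is the same.
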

		\begin{proof}
			By definition,
			\begin{align*}
				&\phantom{{}={}} \fd(\frac{1-\delta}{2}+\frac{1}{2n},1-\delta)\\
				&= \tp{\frac{1-\delta}{2}+\frac{1}{2n}}\log \frac{{\frac{1-\delta}{2}+\frac{1}{2n}}}{1-\delta}+\tp{1-\frac{1-\delta}{2}-\frac{1}{2n}}\log \frac{{1-\frac{1-\delta}{2}-\frac{1}{2n}}}{\delta}\\
				&= \log\tp{1+ \frac{\frac{1-\delta}{2}-\frac{1}{2n}}{\delta}}+\tp{\frac{1-\delta}{2}+\frac{1}{2n}}\tp{\log \tp{1-\frac{\frac{1-\delta}{2}-\frac{1}{2n}}{1-\delta}}  +\log\tp{1-\frac{\frac{1-\delta}{2}-\frac{1}{2n}}{\frac{1+\delta}{2}-\frac{1}{2n}}  } }\\
				&\geq  \tp{\frac{1-\delta}{2}-\frac{1}{2n}}\tp{\frac{1}{ \frac{1+\delta}{2}-\frac{1}{2n}}- \tp{\frac{1-\delta}{2}+\frac{1}{2n}}\tp{\frac{1}{\frac{1-\delta}{2}+\frac{1}{2n}}\cdot\tp{1-\frac{\frac{1-\delta}{2}-\frac{1}{2n}}{2(1-\delta)}}+\frac{1}{\delta} } }\\
				&= \tp{\frac{1-\delta}{2}-\frac{1}{2n}}\tp{\frac{1}{\frac{1+\delta}{2}-\frac{1}{2n}}-\frac{3}{4}-\frac{1}{4(1-\delta)n}-\frac{\frac{1-\delta}{2}+\frac{1}{2n}}{\delta}}\\
				&=\Omega{\tp{\frac{1-\delta}{2}-\frac{1}{2n}}},
			\end{align*}
			where the inequality follows from  (a) \& (b) of \Cref{fact:log-ineq}.
		\end{proof}

		Therefore sample complexity $\E[1]{T}=\Omega\tp{\frac{(1-\delta)n-1}{\eps^2}}$ if $1-\delta=\frac{1+\Omega(1)}{n}$.
		
		\begin{remark}
			\Cref{lem:likely-ratio} is more powerful  for the BAI problem than the previously used tools in the bandit community. The first classic tool is the Bretagnolle–Huber inequality, which is stated as follows:
			
			For any event $\+E\in \+F$ and any two probability distribution $\-P$ and $\-Q$ on a measurable space $(\Omega,\+F)$,  
			$$\-P(\+E)+\-Q(\bar{\+E})\geq \frac{1}{2}\exp\tp{-\DKL \tp{\-P,\-Q}},$$
			where $\DKL$ is the KL divergence.
			It is straightforward to verify that $\DKL\tp{\@H_0,\@H_i}=\Theta\tp{\eps^2 \cdot\E[0]{T_i}}$. However, finding an event such that $\@H_0(\+E_i)+\@H_i(\bar{\+E}_i)\leq \frac{\delta}{2}$ (or even smaller) is challenging, which limits the effectiveness of this tool to cases where $\delta < 0.5$.
			
			Another frequently used method  is based on   a likelihood ratio argument (see e.g. \cite{MT04}).  For any instance $\@H_i$ where $i\in B$, assume that $\E[0]{T_i}<L$ where $L$ is the lower bound on $\E[0]{T_i}$ that we want to prove by contradiction.  Define the event $A=\set{\+A \text{ outputs arms other than arm } i}$, and we know that $\Pr[0]{A}\geq 1-\frac{\delta}{n-k}$ and $\Pr[i]{A}\leq \delta$. 
			We also need to define an additional  event $C$ with certain concentration properties such that  both $T_i < cL$ ( where $c>1$ is a constant or parameter to be determined) and the average rewards of arm $i$, when pulled for the $t$-th time for any $t \leq T_i$, are close to $1/2$. This allows us to lower bound the likelihood ratio $\frac{\Pr[i]{\omega}}{\Pr[0]{\omega}} \geq r$ for any $\omega \in A \cap C$.
			Hence
			$$\Pr[i]{A}\geq \Pr[i]{A\cap C}\geq r   \Pr[0]{A\cap C}.$$

			When $\Pr[0]{C}$ is small, $r$ can be large, but $\Pr[0]{A \cap C}$ is small. Conversely, when $\Pr[0]{C}$ is large, $r$ is small, but $\Pr[0]{A \cap C}$ can be large. The main challenge is to choose an appropriate $C$ to balance these two terms so that their product is greater than $\delta$, leading to a contradiction. Typically, $C$ is selected based on some threshold, and as a result, $\Pr[0]{C}$ tends to be small. This makes the lower bound derived from this method less tight than the one obtained from \Cref{lem:likely-ratio}, as the method only uses a small subset of the samples and is, therefore, less precise.
			
			In contrast, \Cref{lem:likely-ratio} utilizes all the available samples and only applies Jensen's inequality to the function $f(x) = x\log x$ with the values $\frac{\Pr[0]{\omega}}{\Pr[i]{\omega}}$ for $\omega \in \+E_i$ and $\omega \in \bar{\+E}_i$. This makes it a much more robust and accurate tool in comparison.
			
			Moreover, as demonstrated in the next section,   \Cref{lem:likely-ratio} is more concise, making it easier to pair with \Cref{lem:kl-sum}. In contrast, the  likelihood ratio argument is relatively cumbersome, making it harder to combine with \Cref{lem:kl-sum}.
			
		\end{remark}

		\subsubsection{Lower Bound for $(\eps,\delta)$-BAR}
		
		In this part, we establish a more general lower bound for the $(\eps,\delta)$-PAC algorithms for best arm retention (BAR) by refining arguments in the proof of \Cref{thm:bai}. Similar to the proof for BAI, we only need to consider the algorithm with the almost-surely finite stopping time for the sample complexity.  The following theorem is stronger than the lower bound in \Cref{thm:bar}.

		\begin{theorem}\label{thm:bar-lb}
			For any $(\eps,\delta)$-BAR algorithm  with almost-surely finite stopping time such that $\eps\leq \frac{1}{8}$ and $\delta\leq \frac{n-m}{n}(1-\beta)$, where $\beta\in [0,1)$ is a constant, its sample complexity on the input $\@H_1$ satisfies
			$$\E[1]{T-T_1}\geq \frac{\beta(n-m-\delta)}{2\eps^2}\log\frac{n-m-\delta}{(n-1)\delta}.$$
		\end{theorem}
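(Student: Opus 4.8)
The plan is to mirror the warm-up argument for \Cref{thm:bai}, replacing the single-output event by the retention event $\set{j\in S_T}$ and upgrading the scalar per-arm bound to a summation over the arms that are rarely retained on $\@H_1$. Work with the instances $\@H_1$ and $\@H_j$ ($j\ge 2$) already introduced, and for each $j\ge 2$ set $q_j=\Pr[1]{j\in S_T}$ with the event $\+E_j=\set{j\in S_T}$. Two structural facts drive everything. First, on $\@H_j$ arm $j$ is the unique $\eps$-optimal arm (arm $1$ now sits at gap exactly $\eps$), so any $(\eps,\delta)$-BAR algorithm must retain it, giving $\Pr[j]{\+E_j}\ge 1-\delta$. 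Second, since $\abs{S_T}=m$ and arm $1$ is itself the unique $\eps$-optimal arm of $\@H_1$ (hence $\Pr[1]{1\in S_T}\ge 1-\delta$), summing indicators yields the budget constraint $\sum_{j=2}^{n} q_j=m-\Pr[1]{1\in S_T}\le m-1+\delta$.

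Next I would extract a per-arm inequality. Because $\@H_1$ and $\@H_j$ differ only in coordinate $j$, \Cref{lem:likely-ratio} applied with $\+E_j$ collapses to a single KL term, $\E[1]{T_j}\,\fd\tp{\frac12,\frac12+2\eps}\ge \fd\tp{\Pr[1]{\+E_j},\Pr[j]{\+E_j}}$. Using $\Pr[1]{\+E_j}=q_j$, $\Pr[j]{\+E_j}\ge 1-\delta$, and the monotonicity of \Cref{fact:kl}(b), the right-hand side is at least $\fd(q_j,1-\delta)$ for every arm with $q_j\le 1-\delta$ (the few arms with $q_j>1-\delta$ carry nonnegative divergence and may simply be dropped). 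Summing over $j$ and using $\fd\tp{\frac12,\frac12+2\eps}=O(\eps^2)$ for $\eps\le\frac18$ gives $\E[1]{T-T_1}\ge \frac{1}{O(\eps^2)}\sum_{j=2}^{n}\fd(q_j,1-\delta)$, so the task reduces to lower bounding $\sum_{j}\fd(q_j,1-\delta)$ subject to the budget $\sum_j q_j\le m-1+\delta$.

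To produce the logarithmic factor I would invoke the convexity of $\fd(\cdot,1-\delta)$ together with the budget constraint — this is the role of \Cref{lem:kl-sum} — to push all the mass to the balanced point $\frac{m-1+\delta}{n-1}$, giving $\sum_j \fd(q_j,1-\delta)\ge (n-1)\,\fd\tp{\frac{m-1+\delta}{n-1},\,1-\delta}$. Expanding this single divergence splits it into the desired main term $(n-m-\delta)\log\frac{n-m-\delta}{(n-1)\delta}$ and a nonpositive correction $(m-1+\delta)\log\frac{m-1+\delta}{(n-1)(1-\delta)}$; the hypothesis $\delta\le\frac{n-m}{n}$ already forces $\frac{m-1+\delta}{n-1}\le 1-\delta$, so the first term is the dominant, positive one and indeed matches the $\log\frac{n-m-\delta}{(n-1)\delta}$ appearing in the statement.

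The main obstacle — and the step that consumes the hypothesis $\delta\le\frac{n-m}{n}(1-\beta)$ — is showing that the nonpositive correction does not swallow the main term, i.e. that at least a $\beta$-fraction of $(n-m-\delta)\log\frac{n-m-\delta}{(n-1)\delta}$ survives. This is delicate precisely near the boundary $\delta\approx\frac{n-m}{n}$, where the correction is of the same order as the main term and a crude uniform relaxation is too lossy, so the summation in \Cref{lem:kl-sum} must be carried out carefully (retaining the dependence on $\Pr[j]{\+E_j}$ rather than collapsing to $1-\delta$ prematurely). This quantitative KL estimate is the analogue of \Cref{lem:bai-kl} from the warm-up, and I expect it to be the principal technical hurdle of the proof.
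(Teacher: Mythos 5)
Your proposal is correct and follows essentially the same route as the paper: the per-arm application of \Cref{lem:likely-ratio} to the retention events, monotonicity via \Cref{fact:kl}, aggregation through the convexity lemma (\Cref{lem:kl-sum}) against the budget $\sum_{j\ge 2}q_j\le m-1+\delta$, and a final quantitative estimate of $\fd\tp{\frac{m-1+\delta}{n-1},1-\delta}$. The one step you flag as the remaining hurdle is precisely the paper's \Cref{lem:bar-kl}, a short computation from \Cref{fact:log-ineq} showing that the positive term $\tp{n-m-\delta}\log\frac{n-m-\delta}{(n-1)\delta}$ survives with the factor $\beta'/2\ge\beta/2$ after setting $\delta=\frac{n-m}{n}(1-\beta')$.
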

		We reserve the notations introduced in the previous parts except $$\+E_i=\set{\+A \text{ retains arm }i}.$$ For any algorithm, we have $m=\E[1]{\sum_{i=1}^n\ind{\+E_i}}=\sum_{i=1}^n\Pr[1]{\+E_i}$.
		
		If we directly apply an argument similar to that in the proof of~\Cref{thm:bai} here and   redfine the sets as $$B=\set{i\in\set{2,\dots,n}:\Pr[1]{\+E_i}\leq \frac{m-(1-\delta)}{n-k}}$$ and   $$\bar{B}=\set{i\in\set{2,\dots,n}:\Pr[1]{\+E_i}> \frac{m-(1-\delta)}{n-k}},$$ then we can conclude  that there are at least $k$ arms retained with probability at most $\frac{m-(1-\delta)}{n-k}$, meaning $\abs{B}\geq k$. Otherwise, if $\abs{B}\leq k-1$, then $\abs{\bar B}\geq n-k$ since $\abs{B}+\abs{\bar B}=n-1$. Summing up the probabilities over all $n$ arms gives us $\sum_{i=1}^n\Pr[1]{\+E_i}>(n-k)\cdot\frac{m-(1-\delta)}{n-k}+1-\delta=m$, which leads to a contradiction. 
		
		Next, applying \Cref{lem:likely-ratio} to each pair of  arms in $B$ and arm $1$, and summing  over the at least $k$ arms in $B$, we  obtain the  lower bound 
		$$12\eps^2 \E[1]{T-T_1}\geq k\cdot \fd\tp{\frac{m-(1-\delta)}{n-k},1-\delta}.$$
		We should choose a $k$ satisfying $\frac{m-(1-\delta)}{n-k}<1-\delta$ to maximize $k\cdot \fd\tp{\frac{m-(1-\delta)}{n-k},1-\delta}$. Consider a special case with $m=n-1$ and $\delta=\frac{1}{2n}$, where we can only choose $k=1$. Thus $$k\cdot \fd(\frac{m-(1-\delta)}{n-k},1-\delta)=\fd(1-\frac{1-\frac{1}{2n}}{n-1},1-\frac{1}{2n})=\Theta\tp{\frac{1}{n}},$$ which leads to $\E[1]{T}=\Omega\tp{\frac{1}{\eps^2n}}$. However, the upper bound is $T\leq \+O\tp{ \frac{1}{\eps^2}}$ in this case. 
		
		The above analysis is too pessimistic as we classify suboptimal arms (those in $B$ and those not in $B$) via a single threshold. The following lemma allows us to argue about their sum directly.
		
		\begin{lemma}\label{lem:kl-sum}
			For any $x_1,x_2\dots,x_n \in[0,1]$ with average $a\defeq\frac{\sum_{i}x_i}{n}< b\in[0,1]$, then 
			$\sum_{i:x_i<b}\fd(x_i,b)\geq n\cdot \fd(a,b).$
		\end{lemma}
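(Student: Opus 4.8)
The plan is to reduce the sum over the ``small'' coordinates to a single application of Jensen's inequality, after first neutralizing the coordinates with $x_i \ge b$. Write $g(\cdot) \defeq \fd(\cdot, b)$, which by \Cref{fact:kl}(a) is convex on $[0,1]$ and by \Cref{fact:kl}(b) attains its minimum value $g(b) = 0$ at $x = b$; in particular $g$ is nonincreasing on $[0,b]$. Let $S \defeq \set{i : x_i < b}$ and $k \defeq \abs{S}$. Since $a < b$, the coordinates cannot all lie above $b$ (otherwise the average would be at least $b$), so $S$ is nonempty.

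The key trick is to truncate: define $x_i' = x_i$ for $i \in S$ and $x_i' = b$ for $i \notin S$. This accomplishes two things simultaneously. First, because $g(b) = 0$, the coordinates outside $S$ contribute nothing, so $\sum_{i=1}^n g(x_i') = \sum_{i \in S} g(x_i)$, which is exactly the left-hand side we must bound. Second, since every truncated coordinate satisfies $x_i' = b \le x_i$, the new average $a' \defeq \frac{1}{n}\sum_i x_i'$ obeys $a' \le a < b$.

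Next I would apply Jensen's inequality to the convex function $g$ to get $\sum_{i=1}^n g(x_i') \ge n\, g(a')$, and combine it with the first observation to obtain $\sum_{i \in S} g(x_i) \ge n\, g(a')$. Finally, because $g$ is nonincreasing on $[0,b]$ and $a' \le a < b$, we have $g(a') \ge g(a)$, whence $\sum_{i \in S}\fd(x_i,b) \ge n\,\fd(a,b)$, as claimed.

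I expect no serious obstacle here; the only point requiring care is the handling of the discarded terms $x_i \ge b$, which naively would seem to weaken the bound since we are throwing away nonnegative contributions. The resolution is precisely the truncation: replacing each such $x_i$ by $b$ leaves the left-hand side unchanged (those terms were already zero there) while only lowering the overall mean, and since $\fd(\cdot,b)$ increases as its argument moves away from $b$ toward $0$, a smaller mean can only strengthen the right-hand side. This is what allows a single Jensen step to close the gap.
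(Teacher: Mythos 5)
Your proof is correct and uses essentially the same ingredients as the paper's: convexity of $\fd(\cdot,b)$ via Jensen, its monotonicity below $b$ (\Cref{fact:kl}), and the fact that $\fd(b,b)=0$. The truncation to $x_i'=b$ merely packages the paper's two convexity steps (Jensen over $S$, then a two-point convex combination with $b$) into a single application of Jensen, so the argument is the same in substance.
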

		
		\begin{proof}
			
			Recall that $\fd(\cdot,y)$ is convex for any fixed $y$ in \Cref{fact:kl}. Let $S=\set{i:x_i<b}$ and $k=\abs{S}$. By the convexity of $\fd(\cdot,b)$, we have 
			$\frac{1}{k}\sum_{i\in S}\fd(x_i,b)\geq \fd\tp{\frac{\sum_{i\in S}x_i}{k},b}.$
			Since $\fd(x,b)>\fd(y,b)$ if $x<y<b$ in \Cref{fact:kl},
			$$\sum_{i\in S}\fd(x_i,b)\geq k\cdot \fd\tp{\frac{\sum_{i\in S}x_i}{k},b}\geq k\cdot \fd\tp{\frac{an-(n-k)b}{k},b}.$$
			Using the convexity of $\fd(\cdot,b)$ again, we get 
			$$\frac{k}{n}\cdot \fd\tp{\frac{an-(n-k)b}{k},b}+\frac{n-k}{n}\cdot \fd(b,b)\geq \fd(a,b),$$
			which implies $k\cdot \fd\tp{\frac{an-(n-k)b}{k},b}\geq n\cdot \fd(a,b)$ since $\fd(b,b)=0$.
			
		\end{proof}
		
		Armed with \Cref{lem:kl-sum}, we can sum up all $i$ such that $\Pr[1]{\+E_i}\leq 1-\delta$ to which \Cref{lem:likely-ratio} can be applied.
		\begin{align*}
			12\eps^2 \E[1]{T-T_1} &\geq \sum_{i:\Pr[1]{\+E_i}<1-\delta}\fd\tp{\Pr[1]{\+E_i},\Pr[i]{\+E_i}} \quad\quad(\text{By \Cref{lem:likely-ratio}})\\
			&\geq \sum_{i:\Pr[1]{\+E_i}<1-\delta}\fd\tp{\Pr[1]{\+E_i},1-\delta} \quad\quad(\text{By \Cref{fact:kl}})\\
			&\geq (n-1)\cdot \fd\tp{\frac{m-(1-\delta)}{n-1},1-\delta}. \quad\quad(\text{By \Cref{lem:kl-sum}})
		\end{align*}
		
		Finally, we use the following lemma to help analyze the KL divergence:
		\begin{lemma}\label{lem:bar-kl}
			For any $0<a<b<1$, 
			$\fd(b,a)\geq \tp{1-\frac{1}{1+r/(2+r)}}b\log\frac{b}{a},$ where $r\defeq \frac{b-a}{a}$.
		\end{lemma}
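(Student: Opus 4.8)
The plan is to expand the divergence and bound its two terms separately using the elementary log inequalities collected in \Cref{fact:log-ineq}. Writing out the definition, $\fd(b,a)=b\log\frac{b}{a}+(1-b)\log\frac{1-b}{1-a}$, the first summand is exactly the quantity appearing on the right-hand side (up to the constant), while the second is negative since $b>a$ forces $\frac{1-b}{1-a}<1$. So the whole game is to show that this negative second term does not consume more than a $\tp{1-\frac{r}{2(1+r)}}$-fraction of the first. Before starting I would simplify the stated constant: $1-\frac{1}{1+r/(2+r)}=\frac{r}{2(1+r)}$, and I will carry the target in this cleaner form $\fd(b,a)\geq \frac{r}{2(1+r)}\,b\log\frac{b}{a}$ throughout.

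First I would control the negative term. Setting $u=\frac{1-b}{1-a}\in(0,1)$ and applying part (a) of \Cref{fact:log-ineq} in the form $\log u\geq 1-\frac1u$, one gets $(1-b)\log\frac{1-b}{1-a}=(1-a)\,u\log u\geq (1-a)(u-1)=-(b-a)$. Substituting back yields the clean intermediate bound $\fd(b,a)\geq b\log\frac{b}{a}-(b-a)$.

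It then remains to check that $b\log\frac{b}{a}-(b-a)\geq \frac{r}{2(1+r)}\,b\log\frac{b}{a}$, equivalently $\frac{2+r}{2(1+r)}\,b\log\frac{b}{a}\geq b-a$. Using $a=\frac{b}{1+r}$ I can rewrite $b-a=b\cdot\frac{r}{1+r}$ and $\log\frac{b}{a}=\log(1+r)$; after dividing both sides by the positive quantity $\frac{b}{1+r}$, the target collapses to $\frac{2+r}{2}\log(1+r)\geq r$, which is precisely part (b) of \Cref{fact:log-ineq}, namely $\log(1+r)\geq \frac{2r}{2+r}$ for $r>0$. This closes the argument.

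I do not anticipate a genuine obstacle; the only delicate point is the bookkeeping that makes the stated constant $1-\frac{1}{1+r/(2+r)}$ line up with the fraction of $b\log\frac{b}{a}$ left over after subtracting $(b-a)$, together with the choice of \emph{which} of the three inequalities in \Cref{fact:log-ineq} to apply where. Part (a) gives the sharpest clean bound $-(b-a)$ on the negative term, and part (b) is exactly tuned so that the residual $\frac{2+r}{2(1+r)}$-fraction of $b\log\frac{b}{a}$ dominates $b-a$; a looser pairing would not reach the advertised constant.
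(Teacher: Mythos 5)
Your proof is correct and follows essentially the same route as the paper's: expand $\fd(b,a)$, bound the negative term $(1-b)\log\frac{1-b}{1-a}$ below by $-(b-a)$, and use \Cref{fact:log-ineq}~(b) (i.e.\ $\log(1+r)\geq\frac{2r}{2+r}$) to show the remaining $\frac{2+r}{2(1+r)}$-fraction of $b\log\frac{b}{a}$ absorbs $b-a$. The only (immaterial) difference is that you bound the negative term via part~(a) of \Cref{fact:log-ineq} in the form $\log u\geq 1-\frac1u$, whereas the paper uses part~(c) to get the slightly sharper bound $-(b-a)\bigl(1-\frac{b-a}{2(1-a)}\bigr)$, which it then discards anyway.
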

		
		\begin{proof}
			By definition of the KL divergence,
			$\fd(b,a)=b\log\frac{b}{a}+(1-b)\log\frac{1-b}{1-a}.$
			
			\noindent By \Cref{fact:log-ineq} (b) \& (c),
			$$b\log\frac{b}{a}=b\log\tp{1+\frac{b-a}{a}}\geq (b-a)\tp{1+\frac{(b-a)/a}{2+(b-a)/a}}$$
			and 
			$$(1-b)\log\frac{1-b}{1-a}=(1-b)\log\tp{1+\frac{a-b}{1-a}}\geq -(b-a)\tp{1-\frac{b-a}{2(1-a)}}. $$
			Therefore, 
			\begin{align*}
				&\phantom{{}={}}	\fd(b,a) \\
				&= \tp{1-\frac{1}{1+r/(2+r)}}b\log\frac{b}{a}+\frac{1}{1+r/(2+r)}b\log\frac{b}{a}+(1-b)\log\frac{1-b}{1-a}\\
				&\geq \tp{1-\frac{1}{1+r/(2+r)}}b\log\frac{b}{a} +(b-a)-(b-a)\tp{1-\frac{b-a}{2(1-a)}}\\
				&\geq\tp{1-\frac{1}{1+r/(2+r)}}b\log\frac{b}{a}.
			\end{align*}
		\end{proof}
		%
		
		Now we are  ready to bound $\fd(\frac{m-(1-\delta)}{n-1},1-\delta)$. Let $\delta= \frac{n-m}{n}(1-\beta')$ where $\beta'>\beta$ is a universal constant.
		\begin{align*}
			\fd\tp{\frac{m-(1-\delta)}{n-1},1-\delta} 
			&= \fd\tp{\frac{n-m-\delta}{n-1},\delta} \quad \quad(\fd(x,y)=\fd(1-x,1-y))\\
			&=\fd\tp{\frac{n-m}{n}\tp{1+\frac{\beta'}{n-1}},\frac{n-m}{n}(1-\beta')}\\
			&=: \fd(B,A).
		\end{align*}
		Here $\frac{B-A}{A}=\frac{\beta'/(n-1)+\beta'}{1-\beta'}\geq \frac{\beta'}{1-\beta'} =: r'$. Therefore,
		\begin{align*}
			\fd\tp{\frac{m-(1-\delta)}{n-1},1-\delta}& \geq \tp{1-\frac{1}{1+r'/(2+r')}}{\frac{n-m-\delta}{n-1}\log\frac{n-m-\delta}{(n-1)\delta}}\\
			&\geq {\frac{\beta'(n-m-\delta)}{2(n-1)}\log\frac{n-m-\delta}{(n-1)\delta}},\\
			&\geq {\frac{\beta(n-m-\delta)}{2(n-1)}\log\frac{n-m-\delta}{(n-1)\delta}}.
		\end{align*}
		Thus, $\E[1]{T-T_1}\geq \frac{\beta(n-m-\delta)}{2\eps^2}\log\frac{n-m-\delta}{(n-1)\delta}$.

		\begin{remark}\label{remark}
			This approach encounters limitations when $\delta$ approaches the boundary $\frac{n-m}{n}$, specifically when $\frac{n-m}{n}-\delta=o\tp{\frac{1}{n}}$. For instance, consider the scenario where $m=n-1$ and $\delta=\frac{1}{n}-\frac{1}{n^2}$. In this case, $$\fd(\frac{m-(1-\delta)}{n-1},1-\delta)=\fd(\frac{n-1}{n}-\frac{1}{n^2(n-1)},\frac{n-1}{n}+\frac{1}{n^2})=\Theta\tp{\frac{1}{n^2}}.$$ Consequently, the resulting lower bound is $\Omega\tp{\frac{1}{\eps^2 n}}$.
			
			Suppose there exists an algorithm that achieves this lower bound, making it an $(\eps, \frac{1}{n}-\frac{1}{n^2})$-BAR algorithm with a sample complexity of $c\frac{1}{\eps^2 n}$, where $c$ is a universal constant independent of $n$. However, as $n$ grows sufficiently large, such that $c\frac{1}{\eps^2 n}< 1$, this algorithm is paradoxical. It allows for retaining an $\eps$-optimal arm with a higher probability than $\frac{n-1}{n}$ but without exploration, which is logically impossible.
		\end{remark}
		
		\section{$r$-BAR}
		Recall that $r$-BAR requires  the mean difference  between the best arm from $n$ arms and the optimal arm from the retained pool of size $m<n$ is less than $r$ in expectation. In this section, we study both the sample complexity and the minimum regret of this problem. Our results reveal some connections and distinctions between these two optimization objectives.
		
		\subsection{Sample Complexity for $r$-BAR}
		\subsubsection{Exploration Algorithm for  $r$-BAR}
		
		Directly adapting the $(\eps,\delta)$-PAC algorithm to a $r$-BAR setting would imply an expected gap bounded by $\delta + (1-\delta)\eps \leq r$. This translates to $\delta \leq r$ and $\eps \leq 2r$ for $\delta \leq 0.5$. Consequently, the sample complexity of this algorithm becomes $O\left(\frac{n-m}{r^2}\log\frac{n-m}{nr}\right)$. However, when $r$ is small, this bound is not  tight compared to the optimal bound in \Cref{thm:rbar-sample}. To address this, we leverage the insight that a lower expected gap suggests lower regret. Thus, we employ  the procedure  \textsc{MirrorDescent} and choose arms with probabilities proportional to their pull counts. A similar approach has been explored in \cite{BMS09,CHZ23,HYZ24}.
		Let us restate the algorithm for completeness:
		
		\begin{algorithm}
			\Input{arm set $\+S$ of size $n$ and time horizon $T$}\\
			\Output{a good arm}
			\caption{Find best arm with online stochastic mirror descent}\label{algo:findbest} 
			\begin{algorithmic}[1]
				\Procedure{\findbest}{$S,T$}
				\State Run \textsc{MirrorDescent} on $\+S$ with $T$ rounds\;
				\State Compute $T_i,\forall i\in[n]$: the number of times arm $i$ is pulled during $T$ rounds
				\State  Choose arm $i^\prime$ from $\+S$ with probability $\frac{T_{i^\prime}}{T}$\;
				\State \Return arm $i^\prime$
				\EndProcedure
			\end{algorithmic}
			
		\end{algorithm}

		\begin{lemma}\label{lem:findbest}
			Let $i^*$ be the best arm among $\+S$, then for $n$ arms with any mean $\mu$ as input,  \Cref{algo:findbest}  satisfies
			$\E{\mu_{i^*}-\mu_{i^\prime}}\leq\sqrt{\frac{2n}{T}}.$
		\end{lemma}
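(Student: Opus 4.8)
The plan is to reduce the claimed simple-reward guarantee directly to the regret bound for \textsc{MirrorDescent} in \Cref{prop:osmd}. The key observation is that selecting the returned arm $i'$ with probability proportional to its pull count $T_{i'}$ makes the expected gap of $i'$ equal to the time-averaged regret $\frac{R(n,T)}{T}$; since the cumulative regret grows only like $\sqrt{nT}$, this time-average decays like $\sqrt{n/T}$, which is exactly the stated bound.

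First I would identify the distribution of the returned arm. Over all the randomness (the internal rewards of \textsc{MirrorDescent} together with the final proportional draw), arm $i$ is returned with probability $\E{T_i/T}=\frac{\E{T_i}}{T}$, where $T$ is the fixed time horizon given as input. Using that $\sum_{i=1}^n T_i = T$ holds deterministically, the expected mean of the returned arm is $\sum_{i=1}^n \frac{\E{T_i}}{T}\mu_i$, and therefore
\[
\E{\mu_{i^*}-\mu_{i'}} = \mu_{i^*}-\sum_{i=1}^n \frac{\E{T_i}}{T}\mu_i = \frac{1}{T}\sum_{i=1}^n \E{T_i}\tp{\mu_{i^*}-\mu_i} = \frac{1}{T}\sum_{i=1}^n \Delta_i\,\E{T_i}.
\]
Recognizing the definition $R(n,T)=\E{\sum_{i=1}^n \Delta_i T_i}=\sum_{i=1}^n \Delta_i\,\E{T_i}$ from \Cref{ssec:mab}, the right-hand side is exactly $\frac{R(n,T)}{T}$.

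Finally I would invoke \Cref{prop:osmd}, which guarantees $R(n,T)\leq \sqrt{2nT}$ for \textsc{MirrorDescent} run on any MAB instance with $n$ arms (here the best arm being $i^*\in\+S$), to conclude
\[
\E{\mu_{i^*}-\mu_{i'}} = \frac{R(n,T)}{T}\leq \frac{\sqrt{2nT}}{T} = \sqrt{\frac{2n}{T}}.
\]
I do not anticipate a substantive obstacle: the entire content is the identity that, under proportional-to-pull-count sampling, the expected simple regret of the reported arm equals the time-averaged cumulative regret, after which the claim is immediate from the off-the-shelf bound of \Cref{prop:osmd}. The only point requiring a little care is handling the two layers of randomness — the rewards that determine the counts $T_i$ and the final proportional draw of $i'$ — so that the linearity step collapses correctly onto $\sum_{i=1}^n \Delta_i\,\E{T_i}$, together with noting that the deterministic horizon $T$ may be pulled out of the expectation.
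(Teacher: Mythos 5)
Your proposal is correct and follows essentially the same route as the paper: condition on the pull counts so that $\Pr{i'=j\mid T_1,\dots,T_n}=T_j/T$, identify $\E{\mu_{i^*}-\mu_{i'}}$ with $\frac{1}{T}\E{\sum_j \Delta_j T_j}=\frac{R(n,T)}{T}$, and invoke \Cref{prop:osmd}. No issues.
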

		\begin{proof}
			Direct calculation yields
			\begin{align*}
				\E{\mu_{i^*}-\mu_{i^\prime}}& =\E{\E{\mu_{i^*}-\mu_{i^\prime}\vert T_1,T_2,\dots,T_n}}\\
				&=\E{\sum_{\text{arm }j\in\+S}\Delta_j\cdot \Pr{i^\prime=j\vert T_1,T_2,\dots,T_n}}\\
				&=\E{\sum_{\text{arm }j\in\+S}\Delta_j\cdot \frac{T_j}{T} } \leq \sqrt{\frac{2n}{T}},
			\end{align*}
			where the last inequality follows from \Cref{prop:osmd}.
		\end{proof}
		
		We can employ the previously described subroutine to devise our final algorithm. Firstly, we randomly select $n-m+1$ arms from the set of $n$ arms (when $m=1$, we select all $n$ arms), denoted as $S^\prime$. We then run \Cref{algo:findbest} with a sufficient number of rounds. Finally, we add the output arm to the remaining unselected arms to form the output set.
		
		\begin{algorithm}
			\caption{Optimal sampling for $r$-BAR}\label{algo:bar-sample} 
			\Input{arm set $\+S$ of size $n\geq m$ and expectation gap $r$}\\
			\Output{$m$ arms}
			\begin{algorithmic}[1]
				\State Sample $n-m+1$ arms, denoted as $S^\prime$, uniformly at random from $\+S$
				\State $i^\prime=\findbest(S^\prime,T^*)$ where $T^*=\frac{2(n-m+2)^3}{(nr)^2}$\;
				\State \Return $\set{i^\prime}\cup S\backslash S^\prime$
			\end{algorithmic}
		\end{algorithm}
		\begin{theorem}[Part of \Cref{thm:rbar-sample}]\label{thm:rbar-sample-ub}
			\Cref{algo:bar-sample} is an algorithm for $r$-BAR with sample complexity $\+O\tp{\frac{(n-m)^3}{(nr)^2}}$.
		\end{theorem}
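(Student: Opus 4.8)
The plan is to treat the two claims—the sample-complexity bound and the $r$-BAR guarantee—separately, since the former is immediate and the latter is where the analysis lives. For the sample count, observe that the algorithm draws samples only inside the single call $\findbest(S',T^*)$, which runs for exactly $T^*=\frac{2(n-m+2)^3}{(nr)^2}$ rounds. We may assume $m<n$ (if $m=n$ we retain every arm and the expected gap is $0$), so $n-m\geq 1$, whence $n-m+2\leq 3(n-m)$ and therefore $T^*=\+O\tp{\frac{(n-m)^3}{(nr)^2}}$. That settles the complexity.

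For correctness I would condition on whether the globally best arm $i^*$ lands in the randomly chosen subset $S'$. Write the returned set as $S_T=\set{i'}\cup(S\setminus S')$. Since $S'$ is a uniformly random $(n-m+1)$-subset of the $n$ arms, $\Pr{i^*\in S'}=\frac{n-m+1}{n}$. On the event $i^*\notin S'$ the best arm survives directly into $S_T$, so $\max_{i\in S_T}\mu_i=\mu_{i^*}$ and this event contributes $0$ to the expected gap. On the event $i^*\in S'$, the key observation is that $i^*$ is then automatically the best arm of the input set $S'$, so \Cref{lem:findbest} applies verbatim with input set $S'$ of size $n-m+1$ and horizon $T^*$, yielding $\E{\mu_{i^*}-\mu_{i'}\mid S'}\leq\sqrt{\frac{2(n-m+1)}{T^*}}$ for every such $S'$. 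Because $i'\in S_T$ we have $\max_{i\in S_T}\mu_i\geq\mu_{i'}$, so the gap is bounded by $\mu_{i^*}-\mu_{i'}$ on this event.

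Combining the two cases and averaging over the random subset gives
\[
\E{\mu_{i^*}-\max_{i\in S_T}\mu_i}\leq \frac{n-m+1}{n}\sqrt{\frac{2(n-m+1)}{T^*}}.
\]
Substituting $T^*=\frac{2(n-m+2)^3}{(nr)^2}$ and simplifying, the right-hand side equals $r\tp{\frac{n-m+1}{n-m+2}}^{3/2}<r$, which is exactly the $r$-BAR requirement. Note that the use of $n-m+2$ rather than $n-m+1$ in the definition of $T^*$ is precisely what makes this final inequality strict.

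This argument is essentially a routine combination of two ingredients, so I do not expect a genuine obstacle; the one place to be careful is the conditioning. One must invoke \Cref{lem:findbest} with the correct input size $n-m+1$ (the size of $S'$, not $n$), and one must recognize that the $\frac{n-m+1}{n}$ factor coming from the random subset multiplies—rather than competes with—the $\sqrt{2(n-m+1)/T^*}$ factor from the mirror-descent guarantee. It is exactly this product that reduces the effective gap and lets $T^*$ scale like $(n-m)^3/(nr)^2$ instead of the weaker $(n-m)/r^2$ one would get from a naive $(\eps,\delta)$-PAC reduction.
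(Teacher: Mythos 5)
Your proof is correct and follows essentially the same route as the paper's: decompose the expected gap according to whether $i^*\in S'$, note the event $i^*\notin S'$ contributes zero, and bound the other branch by $\frac{n-m+1}{n}\sqrt{2(n-m+1)/T^*}<r$ via \Cref{lem:findbest}. You are merely more explicit than the paper about the conditioning on $S'$ and about verifying the final numerical inequality $\bigl(\tfrac{n-m+1}{n-m+2}\bigr)^{3/2}<1$, which is a welcome but inessential elaboration.
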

		\begin{proof}
			The sample complexity is straightforward.  To demonstrate that the expected gap between the best arm in $\set{i^\prime}\cup S\backslash S^\prime$, denoted by $\hat{i}$, and the best arm $i^*$ among all $n$ arms is less than $r$, note that $i^*$ is excluded only if $i^*\in S^\prime$. Thus,
			\begin{align*}
				\E{\mu_{i^*}-\mu_{\hat{i}}} &= \Pr{i^*\notin S^\prime}\E{\mu_{i^*}-\mu_{\hat{i}}\vert i^*\notin S^\prime} + \Pr{i^*\in S^\prime}\E{\mu_{i^*}-\mu_{\hat{i}}\vert i^*\in S^\prime} \\
				&\leq 0+  \frac{n-m+1}{n}\sqrt{\frac{2(n-m+1)}{T^*}}< r,
			\end{align*}
			where the inequality follows from \Cref{lem:findbest}.
		\end{proof}

		\subsubsection{Lower Bound of \Cref{thm:rbar-sample}}\label{sssec:rbar-sample-lb}
		Let $\hat{i}$ be the best arm among the retained $m$ arms. 
		For any $r$-BAR algorithm, we have $\E{\mu_{i^*}-\mu_{\hat{i}}}\leq r$. By Markov inequality, we have $\Pr{\mu_{i^*}-\mu_{\hat{i}}\geq cr}\leq \frac{1}{c}$ for any $c>0$. This implies that an $r$-BAR algorithm is also  a $(cr,\frac{1}{c})$-PAC algorithm for BAR. Thus the sample complexity is bounded below by $\Omega\tp{\frac{n-m}{(cr)^2}\log\frac{c(n-m)}{n}}$, as per \Cref{thm:bar-lb}. Choosing $c=\frac{2n}{n-m}$ completes the proof for the lower bound of  \Cref{thm:rbar-sample}.

		\subsection{Regret Minimization for $r$-BAR}
		
		\subsubsection{Upper Bound}
		
		While a pure exploration algorithm suffices for $r$-BAR, it may yield large regret. For instance, if $i^*$ is not chosen by \Cref{algo:bar-sample} initially, a low-regret algorithm like  \textsc{MirrorDescent} running on the suboptimal arm can still result in significant regret. To address this issue, we first run \findbest on all $n$ arms for a few rounds. We then add the output arm to the randomly chosen $n-m+1$ arms. Subsequently, we run \findbest on these $n-m+2$ arms again, with the subsequent process identical to that of \Cref{algo:bar-sample}, except for retaining the optimal arm from the initial process. Define $L_2=\frac{2(n-m+2)^3}{(n-1)^2r^2}$ and $L_1=\frac{m-2}{n-1}L_2$. Our algorithm outlined in \Cref{algo:bar-regret} follows a similar approach as proposed in \cite{HYZ24}, albeit with distinct objectives.
		\begin{algorithm}
			\caption{Low-regret sampling for BAR}\label{algo:bar-regret} 
			\Input{arm set $\+S$ of size $n\geq m$ and expectation gap $r$}\\
			\Output{$m$ arms}
			\begin{algorithmic}[1]
				\State	$i_1=\findbest(S, L_1)$
				\State Sample $n-m+1$ arms, denoted as $S^\prime$, uniformly at random from $S\setminus{i_1}$
				\State $i_2=\findbest(S^\prime\cup\set{i_1}, L_2)$
				\If{$i_2=i_1$}
				\State Uniformly at random choose $n-m$ arms from $S^\prime\backslash\set{i_2}$ to drop
				\Else
				\State Drop all arms in $S^\prime\backslash\set{i_2}$
				\EndIf
				\State \Return the remaining arms
			\end{algorithmic}
		\end{algorithm}
		\begin{theorem}
			\Cref{algo:bar-regret} is an algorithm for $r$-BAR with regret $\+O\tp{\sqrt{\frac{(n-m)^3}{nr^2}}}$.
		\end{theorem}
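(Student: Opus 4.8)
The plan is to verify two things about \Cref{algo:bar-regret} separately: that the retained pool satisfies the $r$-BAR guarantee (expected gap $<r$), and that the total regret over the two \findbest phases matches the claimed bound. For correctness, the crucial structural observation is that the algorithm always retains the $m-2$ arms lying outside $S'\cup\set{i_1}$, together with both $i_1$ and $i_2$ (in either branch of the \textbf{if} statement). Hence the only way the retained pool can fail to contain an arm as good as $i^*$ is when $i^*\in S'$: if $i^*\notin S'\cup\set{i_1}$ then $i^*$ itself is retained, and if $i^*=i_1$ then $i_1$ is retained, so in both cases the gap is $0$. When $i^*\in S'$ we have $i^*\in S'\cup\set{i_1}$, so $i^*$ is the best arm of the set on which the second \findbest runs, and \Cref{lem:findbest} bounds the conditional expected gap between $\mu_{i^*}$ and $\mu_{i_2}$ by $\sqrt{2(n-m+2)/L_2}$. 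Multiplying by $\Pr{i^*\in S'}\le \frac{n-m+1}{n-1}$ and substituting $L_2=\frac{2(n-m+2)^3}{(n-1)^2r^2}$ collapses the expected gap to $\frac{(n-m+1)r}{n-m+2}<r$.

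For the regret I would decompose the total regret against $\mu_{i^*}$ into the contributions of the two phases. Phase $1$ runs \textsc{MirrorDescent} on all $n$ arms, whose best arm is $i^*$, so by \Cref{prop:osmd} its regret is at most $\sqrt{2nL_1}$. Phase $2$ runs \textsc{MirrorDescent} on $S'\cup\set{i_1}$ for $L_2$ rounds; writing $j^*$ for the best arm of this set, \Cref{prop:osmd} bounds the \emph{internal} regret against $j^*$ by $\sqrt{2(n-m+2)L_2}$, and the remaining contribution is the per-round gap $L_2\tp{\mu_{i^*}-\mu_{j^*}}$ incurred because $i^*$ may lie outside $S'\cup\set{i_1}$.

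The key step, and the one that dictates the choice $L_1=\frac{m-2}{n-1}L_2$, is bounding $\E{\mu_{i^*}-\mu_{j^*}}$. Since $i_1\in S'\cup\set{i_1}$ always, we have $\mu_{j^*}\ge \mu_{i_1}$, hence $\mu_{i^*}-\mu_{j^*}\le \mu_{i^*}-\mu_{i_1}$, and this gap is nonzero only on the event $i^*\notin S'\cup\set{i_1}$. Conditioned on $i_1\neq i^*$, the random choice of $S'$ misses $i^*$ with probability exactly $\frac{m-2}{n-1}$, so this event carries a factor $\frac{m-2}{n-1}$; combined with the Phase $1$ guarantee $\E{\mu_{i^*}-\mu_{i_1}}\le\sqrt{2n/L_1}$ from \Cref{lem:findbest}, this yields $\E{\mu_{i^*}-\mu_{j^*}}\le \frac{m-2}{n-1}\sqrt{2n/L_1}$. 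With the stated values of $L_1,L_2$ the Phase $1$ regret $\sqrt{2nL_1}$ and the Phase $2$ gap regret $L_2\cdot\frac{m-2}{n-1}\sqrt{2n/L_1}$ become equal, each of order $\sqrt{2n(m-2)L_2/(n-1)}$, and together with the internal term $\sqrt{2(n-m+2)L_2}$ all three pieces are $\+O\tp{\sqrt{\frac{(n-m)^3}{nr^2}}}$ after substitution.

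I expect the main obstacle to be the correct bookkeeping of the factor $\frac{m-2}{n-1}$: it is tempting to bound $\E{\mu_{i^*}-\mu_{j^*}}$ simply by $\sqrt{2n/L_1}$, but without the probability that $i^*$ is actually \emph{excluded} from the second set, the Phase $2$ gap term would be a factor $\sqrt{n/m}$ too large and the balance with $L_1$ — hence the target bound — would be lost. Keeping the two \findbest guarantees (one per phase) attached to the right conditioning events, and checking that each of the three regret pieces is dominated by $\frac{(n-m)^{3/2}}{\sqrt{n}\,r}$, is where the care is needed; the remaining substitutions are routine.
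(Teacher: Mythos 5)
Your proposal is correct and follows essentially the same route as the paper's proof: the same three-way regret decomposition (phase-1 regret $\sqrt{2nL_1}$, phase-2 internal regret $\sqrt{2(n-m+2)L_2}$, and the per-round gap $L_2\,\E{\mu_{i^*}-\mu_{j^*}}$ controlled via the factor $\frac{m-2}{n-1}$ from the event $i^*\notin S'\cup\{i_1\}$), and the same correctness argument conditioning on $i^*\in S'$. The only detail the paper treats that you omit is the degenerate small-$m$ regime (it handles $m=1$ by running \findbest on all $n$ arms directly), but otherwise the arguments coincide.
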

		\begin{proof}
			For the case $m=1$, we can easily run \findbest on $n$ arms with $\frac{2n}{r^2}$ rounds.  It is clear that this is a $r$-BAR algorithm by \Cref{lem:findbest} and the regret is no greater than $\frac{2n}{r}$ by \Cref{prop:osmd}. For $m\ge 2$, we run \Cref{algo:bar-regret} and the proof follows a similar structure to that of \Cref{thm:rbar-sample-ub}. Let $\hat{i}$ denote the best arm among the retained arms. Since $i^*$ will be dropped only if $i^*\in S^\prime$, thus 
			\begin{align*}
				\E{\mu_{i^*}-\mu_{\hat{i}}} &= \Pr{i^*\in S^\prime} \E{\mu_{i^*}-\mu_{\hat{i}}\vert i^*\in S^\prime} \\
				&=\Pr{i^*\neq i_1}\Pr{i^*\in S^\prime\vert i^*\neq i_1} \E{\mu_{i^*}-\mu_{\hat{i}}\vert i^*\in S^\prime} \\
				&\leq \frac{n-m+1}{n-1} \E{\mu_{i^*}-\mu_{\hat{i}}\vert i^*\in S^\prime} \\
				&\leq \frac{n-m+1}{n-1}\sqrt{\frac{2(n-m+2)}{L_2}}< r,
			\end{align*}
			where the last inequality follows from \Cref{lem:findbest}.

			Regarding regret, the initial \findbest procedure incurs a regret of $\sqrt{2nL_1}$. In the subsequent step, let $i^\prime$ denote the best arm in $S^\prime\cup{i_1}$. The regret between playing $i^\prime$ and the algorithm over $L_2$ rounds amounts to $\sqrt{2(n-m+2)L_2}$. If $i^\prime$ is not $i^*$, then
			\begin{align*}
				\E{\mu_{i^*}-\mu_{i^\prime}} &= \Pr{i^*\notin S^\prime\cup\set{i_1}}  \E{\mu_{i^*}-\mu_{i^\prime}\vert i^*\notin S^\prime\cup\set{i_1}}\\
				&= \Pr{i_1\neq i^*}\Pr{i^*\notin S^\prime\vert i^*\neq i_1}  \E{\mu_{i^*}-\mu_{i^\prime}\vert i^*\notin S^\prime\cup\set{i_1}}\\
				&\leq \frac{m-2}{n-1}\Pr{i_1\neq i^*} \E{\mu_{i^*}-\mu_{i_1}\vert i^*\notin S^\prime\cup\set{i_1}}\\
				&\overset{\heartsuit}{=} \frac{m-2}{n-1}\Pr{i_1\neq i^*} \E{\mu_{i^*}-\mu_{i_1}\vert i_1\neq i^*}\\
				&= \frac{m-2}{n-1}\E{\mu_{i^*}-\mu_{i_1}} \overset{\clubsuit}{\leq}  \frac{m-2}{n-1}\sqrt{\frac{2n}{L_1}},
			\end{align*}
			
			where $\heartsuit$ follows that $\mu_{i^*}-\mu_{i_1}$ is independent of  $\ind{i^*\notin S^\prime}$ conditioned on $i^*\neq i_1$, and $\clubsuit$ is because of \Cref{lem:findbest}.
			Therefore the regret is 
			\begin{align*}
				\sqrt{2nL_1}+\sqrt{2(n-m+2)L_2)}+ \frac{m-2}{n-1}\sqrt{\frac{2n}{L_1}}L_2
				&\leq \+O\tp{\frac{\sqrt{m(n-m)^3}}{nr}+\frac{(n-m)^2}{nr}}\\
				&\leq \+O\tp{\frac{(n-m)^2}{nr}\tp{1+\sqrt{\frac{m}{n-m}}}}.
			\end{align*}
			When $n-m=\Omega(n)$,  our regret bound is $\+O\tp{\frac{(n-m)^2}{nr}}$.
		\end{proof}
		
		\subsubsection{Proof of the Lower Bound of  \Cref{thm:rbar-regret}}\label{sssec:rbar-regret-lb}
		For the scenario where the algorithm does not almost surely stop within finite time, achieving a large regret lower bound requires more effort. In such cases, we cannot deduce a large regret by infinite sample complexity because the algorithm may continually pull the best arm. To tackle this, we first establish a lower bound for algorithms with an almost-surely finite stopping time, and then reduce any algorithm to this case.

		For the algorithm with almost-surely finite stopping time, similar to the proof of the lower bound in \Cref{sssec:rbar-sample-lb}, an $r$-BAR algorithm also acts as a $(\frac{2nr}{n-m},\frac{n-m}{2n})$-PAC algorithm for BAR. Consequently, it must play the suboptimal arms  $\E[1]{T-T_1}=\Omega\tp{\frac{(n-m)^3}{(nr)^2}}$ times on $\@H_1$ with $\eps=\frac{2nr}{n-m}$. Therefore, the regret by Wald's equation~(see e.g. \cite{siegmund13}) is $\Omega\tp{\frac{2nr}{n-m}\E[1]{T-T_1}}=\Omega\tp{\frac{(n-m)^2}{nr}}$. 
		
		Now assume there exists a $\frac{r}{2}$-BAR algorithm $\+A$ with regret  $o\tp{\frac{(n-m)^2}{nr}}$, and let $T' =\omega\tp{\frac{(n-m)^2}{nr^2}}$ be a fixed number.  We can construct an  algorithm $\+A^\prime$ with finite stopping time as follows: If $\+A$ stops with $T< T^\prime$ and outputs $S_T$, then $\+A^\prime$ simulates it. Otherwise, $\+A^\prime$ stops in the $T^\prime$-th round, chooses an arm $i^\prime$ proportional to the pull times of each arm in $T^\prime$ rounds, similar to the procedure \findbest, and outputs it with $m-1$ randomly chosen arms as $S_{T^\prime}$.
		
		We use $\E[\+A]{\cdot}$ and $\E[\+A^\prime]{\cdot}$ to denote the expectation of the corresponding algorithms running on some MAB instance and let $\hat{i}$ denote the optimal arm among the retained subset arms, similarly for $\Pr[\+A]{\cdot}$ and $\Pr[\+A^\prime]{\cdot}$. We use $T_i$ to denote the number of times that $i$ is pulled and $\+R=\sum_{i=1}^n\Delta_i T_i$.

		It is evident that the regret of $\+A^\prime$ is less than  that of $\+A$ because it may stop earlier.  Now we claim that $\+A^\prime$ is an $r$-BAR algorithm with regret $o\tp{\frac{(n-m)^2}{nr}}$:
		\begin{align*}
			\E[\+A^\prime]{\mu_{i^*}-\mu_{\hat{i}}} &=\Pr[\+A^\prime]{T \geq T^\prime} \E[\+A^\prime]{\mu_{i^*}-\mu_{\hat{i}} \vert T \geq T^\prime} +\Pr[\+A^\prime]{T < T^\prime} \E[\+A^\prime]{\mu_{i^*}-\mu_{\hat{i}} \vert T < T^\prime}\\
			& \leq \Pr[\+A]{T \geq T^\prime} \E[\+A^\prime]{\mu_{i^*}-\mu_{i^\prime} \vert T \geq T^\prime} +\Pr[\+A]{T < T^\prime} \E[\+A]{\mu_{i^*}-\mu_{\hat{i}} \vert T < T^\prime}\\
			&\leq  \Pr[\+A]{T \geq T^\prime} \E[\+A^\prime]{\frac{\+R}{T^\prime} \vert T \geq T^\prime} + \E[\+A]{\mu_{i^*}-\mu_{\hat{i}} }\\
			&\leq  \frac{1}{T^\prime}\Pr[\+A]{T \geq T^\prime} \E[\+A]{\+R \vert T \geq T^\prime} + \frac{r}{2}
			\leq  \frac{1}{T^\prime}\E[\+A]{\+R} + \frac{r}{2}\leq r,
		\end{align*}
		which leads to a contradiction. Hence,  the regret of any $r$-BAR algorithm  is $\Omega\tp{\frac{(n-m)^2}{nr}}$.
		
		\subsection{Difference between Sample Complexity and Regret Minimization}\label{ssec:diff}
		The proof of the lower bound in \Cref{sssec:rbar-sample-lb} reveals that the challenging scenario for $r$-BAR with optimal regret occurs in $\@H_1$ with $\eps=\Theta\tp{\frac{nr}{n-m}}$.  Our analysis in \Cref{subsec:PAC-lb} shows that, on this instance, the requisite number of rounds $T$ is $\Theta\tp{\frac{(n-m)^3}{(nr)^2}}$ in expectation.
		
		If we consider an MAB game with fixed rounds $T=\Theta\tp{\frac{(n-m)^3}{(nr)^2}}$, it is well known that the optimal regret is $\Theta\tp{\sqrt{nT}}=\+O\tp{\frac{(n-m)^2}{nr}\tp{1+\sqrt{\frac{m}{n-m}}}}$, which matches our upper bound in \Cref{thm:rbar-regret}. Previous works have shown that this regret lower bound for MAB problem is achieved on the hard instance $\@H_1$ but with mean gap parameter $\eps^\prime=\Theta\tp{\sqrt{\frac{n}{T}}}=\Theta\tp{\sqrt{\frac{n}{n-m}}\cdot \eps}$ (see \cite{LS20}). This indicates a regret lower bound for \Cref{algo:bar-regret}: if an $r$-BAR algorithm runs for ${T}= \Theta\tp{\frac{(n-m)^3}{(nr)^2}}$ rounds on any instances (as our \Cref{algo:bar-regret} does), then it has to suffer a regret of $\Omega\tp{\frac{(n-m)^2}{nr}\tp{1+\sqrt{\frac{m}{n-m}}}}$ on some instances.
		
		This discrepancy between our regret upper and lower bounds indicates a natural idea to improve the algorithm. Note that $\@H_1$ with $\eps^\prime$ is not the hardest instance for $r$-BAR. An optimal algorithm need not play $\Theta\tp{\frac{(n-m)^3}{(nr)^2}}$ rounds on this instance. It should be more adaptive, sampling for different number of times on different instances, rather than treating all instances equally by handling them as the hardest one.
		
		
		
		In a nutshell, we conjecture that our lower bound is tight, and a more sophisticated algorithm is required to obtain optimal regret upper bounds.
		\begin{conjecture}
			The regret complexity of the $r$-BAR is $\Theta\tp{\frac{(n-m)^2}{nr}}.$
		\end{conjecture}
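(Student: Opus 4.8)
Since the lower bound $\Omega\tp{\frac{(n-m)^2}{nr}}$ is already established in \Cref{sssec:rbar-regret-lb}, proving the conjecture amounts to a matching upper bound, i.e. designing an algorithm whose regret is $\+O\tp{\frac{(n-m)^2}{nr}}$ on \emph{every} instance, thereby shaving the factor $\tp{1+\sqrt{\frac{m}{n-m}}}$ off \Cref{thm:rbar-regret}. The plan is to retain the two-phase skeleton of \Cref{algo:bar-regret}---a first phase producing a global candidate $i_1$, followed by competition via \findbest on $S^\prime\cup\set{i_1}$---but to make the first phase \emph{gap-adaptive} with an \emph{early-stopping} rule, instead of running \findbest for a fixed $L_1$ rounds.

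The starting point is a precise accounting of the spurious factor. In \Cref{algo:bar-regret} the first \findbest contributes regret $\sqrt{2nL_1}=\Theta\tp{\frac{\sqrt{m}\,(n-m)^{3/2}}{nr}}$, and by the matching choice of $L_1$ the exclusion regret---the extra regret over the $L_2$ rounds of phase two incurred when $i^*\notin S^\prime\cup\set{i_1}$, so that the competition benchmark is only $i_1$ rather than $i^*$---is of the same order. Both are paid on \emph{all} instances because \textsc{MirrorDescent} is merely minimax-optimal: it suffers $\Theta(\sqrt{nL_1})$ regret even when one arm dominates. The sole purpose of phase one is to force $\E{\mu_{i^*}-\mu_{i_1}}=\+O\tp{\sqrt{n/L_1}}$ (via \Cref{lem:findbest}), keeping the exclusion term below $r$; hence phase one is an investment whose size is governed by the exclusion risk, and the key is to pay for it \emph{adaptively}.

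Concretely, I would replace phase one by a gap-dependent best-arm routine (UCB or successive elimination) run on all $n$ arms with an anytime confidence schedule, stopping as soon as a candidate $i_1$ is certified with $\mu_{i^*}-\mu_{i_1}=\+O\tp{\frac{nr}{n-m}}$. The analysis then proceeds by a case split on the unknown gap profile. When the best arm is well separated, the routine identifies $i_1=i^*$ after $\+O\tp{\sum_{i\neq i^*}\Delta_i^{-2}\log(\cdot)}$ pulls with gap-dependent regret $\+O\tp{\sum_{i\neq i^*}\Delta_i^{-1}\log(\cdot)}$, which one checks stays within the budget, and exclusion never occurs; on near-flat instances such as $\@H_1$ with $\eps=\Theta\tp{\frac{nr}{n-m}}$ no arm can be certified, but there \emph{any} arm is within $\+O(\eps)$ of optimal, so even a trivial (or very short) phase one yields a candidate $i_1$ for which $L_2\cdot\eps=\Theta\tp{\frac{(n-m)^2}{nr}}$ already dominates and matches the budget, while the second phase contributes the optimal $\sqrt{2(n-m)L_2}=\Theta\tp{\frac{(n-m)^2}{nr}}$. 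The stopping rule must be tuned so that the \emph{sum} of phase-one and exclusion regret is $\+O\tp{\frac{(n-m)^2}{nr}}$ uniformly over all gap profiles.

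The main obstacle is exactly this simultaneity: the instances that are expensive for phase one (flat profiles, where no candidate certifies cheaply) coincide with those on which exclusion regret is most dangerous, so the two phases cannot be decoupled as in the fixed-length analysis. An honest proof needs an instance-wise trade-off interleaving the gap-dependent stopping time of phase one with the exclusion bound, rather than bounding each term by its worst case; in particular one must show the adaptive stopping time never drives phase-one regret above $\frac{(n-m)^2}{nr}$ while still keeping $\E{\mu_{i^*}-\mu_{i_1}}$ small enough to tame exclusion on the intermediate instances that interpolate between the flat and the well-separated extremes. Pinning down this trade-off is the crux, and is precisely why the matching upper bound appears here only as a conjecture.
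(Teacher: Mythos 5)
You have not proved the statement, and you are candid about this: what you give is a proof plan whose decisive step --- the instance-wise trade-off between phase-one regret and exclusion regret --- is explicitly left open. Note that the statement is posed in the paper as a conjecture: the paper itself only establishes the lower bound $\Omega\tp{\frac{(n-m)^2}{nr}}$ (\Cref{sssec:rbar-regret-lb}) and an upper bound exceeding it by a factor $1+\sqrt{m/(n-m)}$ (\Cref{thm:rbar-regret}), and its discussion in \Cref{ssec:diff} already contains your diagnosis: any algorithm that plays all $n$ arms for $\Theta\tp{\frac{(n-m)^3}{(nr)^2}}$ rounds on every instance must suffer the larger regret on some instance, so adaptivity is necessary. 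Your accounting of where the spurious factor arises (the balance of $\sqrt{2nL_1}$ against the exclusion term) is correct and consistent with the paper. But as a proof, the gap is the entire argument.

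More substantively, the specific architecture you propose --- an adaptive phase one followed by a phase two of \emph{fixed} length $L_2=\Theta\tp{\frac{(n-m)^3}{(nr)^2}}$ --- appears structurally insufficient, not merely unfinished. To keep the exclusion term $\frac{m}{n}\,\E{\mu_{i^*}-\mu_{i_1}}\,L_2$ within the budget $\frac{(n-m)^2}{nr}$ you need $\E{\mu_{i^*}-\mu_{i_1}}\leq \eps_1:=\frac{n^2r}{m(n-m)}$. Consider an instance in which the best arm leads every other arm by the same gap $\Delta$ with $\eps_1 < \Delta < \eps_1\cdot\frac{m}{n-m}$; this window is nonempty precisely when $m/(n-m)$ is large, i.e.\ exactly in the regime where the paper's bounds diverge. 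Tolerating exclusion there costs roughly $\frac{m}{n}\Delta L_2 > \frac{(n-m)^2}{nr}$; but achieving $\Pr{i_1\neq i^*}\leq \eps_1/\Delta$ requires, by the same change-of-measure argument as in \Cref{subsec:PAC-lb}, that the $n-1$ suboptimal arms be pulled $\Omega(n/\Delta^2)$ times in total, which already incurs phase-one regret $\Omega(n/\Delta) > \frac{(n-m)^2}{nr}$. So on these intermediate instances \emph{neither} branch of your case split fits the budget: the only escape is to shorten phase two adaptively as well (exploiting that the $r$-BAR correctness constraint is slack when $\Delta$ is large), or to abandon the two-phase structure altogether. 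This is precisely the ``more sophisticated adaptive strategy'' the paper alludes to and the reason the matching upper bound remains open; your plan locates the crux correctly but does not overcome it.
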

		\section{Conclusion and Future Work}
		This work initiates from addressing the lower bound for the $(\eps,\delta)$-PAC algorithm for BAI, aiming to eliminate the previously imposed restriction $\delta<0.5$ in the literature. Subsequently, we extend our proof technique to establish lower bounds of the PAC algorithm for BAR, and determine the sample complexity of $(\eps,\delta)$-BAR for nearly all admissible parameters.
		
		In addition to exploring pure exploration methods, we extend our investigation to regret minimization for $r$-BAR. Notably, we conjecture that our derived lower bound is optimal, warranting further investigation into the design of optimal algorithms. 
		
		Another interesting direction is to establish instance-dependent bounds for BAR, rather than focusing on  the PAC algorithm of BAR as we do in this work.
		
		\bibliographystyle{alpha}
		\bibliography{bar}


		
		
		
		%
		%
		%
	\end{document}